\theoremstyle{plain}
\newtheorem{theorem}{Theorem}[section]
\newtheorem{lemma}[theorem]{Lemma}
\theoremstyle{definition}
\newtheorem{definition}[theorem]{Definition}
\theoremstyle{remark}
\newcommand{\bv}{\mathbf{v}}
\newcommand{\bA}{\mathbf{A}}
\newcommand{\bh}{\mathbf{h}}
\newcommand{\mypara}[1]{{\smallskip \noindent \bf #1}\hspace{0.0in}}
\newcolumntype{L}[1]{>{\raggedright\let\newline\\\arraybackslash\hspace{0pt}}m{#1}}
\newcolumntype{C}[1]{>{\centering\let\newline  \\\arraybackslash\hspace{0pt}}m{#1}}
\newcolumntype{R}[1]{>{\raggedleft\let\newline \\\arraybackslash\hspace{0pt}}m{#1}}
\newcommand{\czh}[1]{{\color{cyan}(czh: #1)}}
\newcommand{\czh}[1]{}
\DeclareMathOperator*{\argmax}{argmax}
\icmltitlerunning{MAPLE: Many-Shot Adaptive Pseudo-Labeling for In-Context Learning}
\begin{document}

\twocolumn[

\icmltitle{MAPLE: Many-Shot Adaptive Pseudo-Labeling for In-Context Learning}



\icmlsetsymbol{equal}{*}

\begin{icmlauthorlist}
\icmlauthor{Zihan Chen}{equal,uva}
\icmlauthor{Song Wang}{equal,uva}
\icmlauthor{Zhen Tan}{asu}
\icmlauthor{Jundong Li}{uva}
\icmlauthor{Cong Shen}{uva}

\end{icmlauthorlist}

\icmlaffiliation{uva}{University of Virginia, Charlottesville, VA, USA}
\icmlaffiliation{asu}{Arizona State University, Tempe, AZ, USA}

\icmlcorrespondingauthor{Cong Shen}{cong@virginia.edu}

\icmlkeywords{Machine Learning, ICML}

\vskip 0.3in
]



\printAffiliationsAndNotice{\icmlEqualContribution} 

\begin{abstract}
In-Context Learning (ICL) empowers Large Language Models (LLMs) to tackle diverse tasks by incorporating multiple input-output examples, known as demonstrations, into the input of LLMs. More recently, advancements in the expanded context windows of LLMs have led to many-shot ICL, which uses hundreds of demonstrations and outperforms few-shot ICL, which relies on fewer examples. However, this approach is often hindered by the high cost of obtaining large amounts of labeled data. 
To address this challenge, we propose \textbf{\underline{M}}any-Shot \textbf{\underline{A}}daptive \textbf{\underline{P}}seudo-\textbf{\underline{L}}ab\textbf{\underline{E}}ling, namely \textbf{MAPLE}, a novel influence-based many-shot ICL framework that utilizes pseudo-labeled samples to compensate for the lack of label information. 
We first identify a subset of impactful unlabeled samples and perform pseudo-labeling on them by querying LLMs. These pseudo-labeled samples are then adaptively selected and tailored to each test query as input to improve the performance of many-shot ICL, without significant labeling costs.
Extensive experiments on real-world datasets demonstrate the effectiveness of our framework, showcasing its ability to enhance LLM adaptability and performance with limited labeled data. Our code is provided at \url{https://github.com/Chen-1031/MAPLE_ICL}.

\end{abstract}

\section{Introduction}

In-Context Learning (ICL) has emerged as a fundamental capability of large language models (LLMs)~\cite{zhao2023survey,chang2024survey}, enabling them to perform diverse tasks with a set of input-output examples, i.e., demonstrations, as input~\cite{brown2020language,wang2023learning,wang2024large}. In contrast to fine-tuning strategies, ICL does not update model parameters, making it an efficient and flexible approach for enhancing the performance of LLMs~\cite{rubin2022learning,lu2021fantastically}. 
%
More recently, advancements in expanding the context windows of LLMs to accommodate a large number of input tokens have enabled  \textit{many-shot ICL}~\cite{agarwal2024many}, where hundreds of demonstrations are incorporated into the input. Many-shot ICL has been shown to significantly improve performance, particularly for complex or nuanced tasks, by providing richer task-specific information~\cite{baek2024revisiting}.

However, many-shot ICL faces a critical limitation: the high cost associated with acquiring a large volume of labeled data as demonstrations~\cite{li2024long}. 
This challenge is particularly pronounced in resource-constrained settings where manual labeling is expensive or infeasible (e.g., complex reasoning tasks), thereby limiting the applicability of many-shot ICL. 
An alternative strategy is to use unlabeled samples as demonstrations. However, as shown in Fig.~\ref{fig:intro}, the performance improvement becomes marginal and unstable in comparison to using labeled demonstrations.

\begin{figure}[t]
\centering
\includegraphics[width=\linewidth]{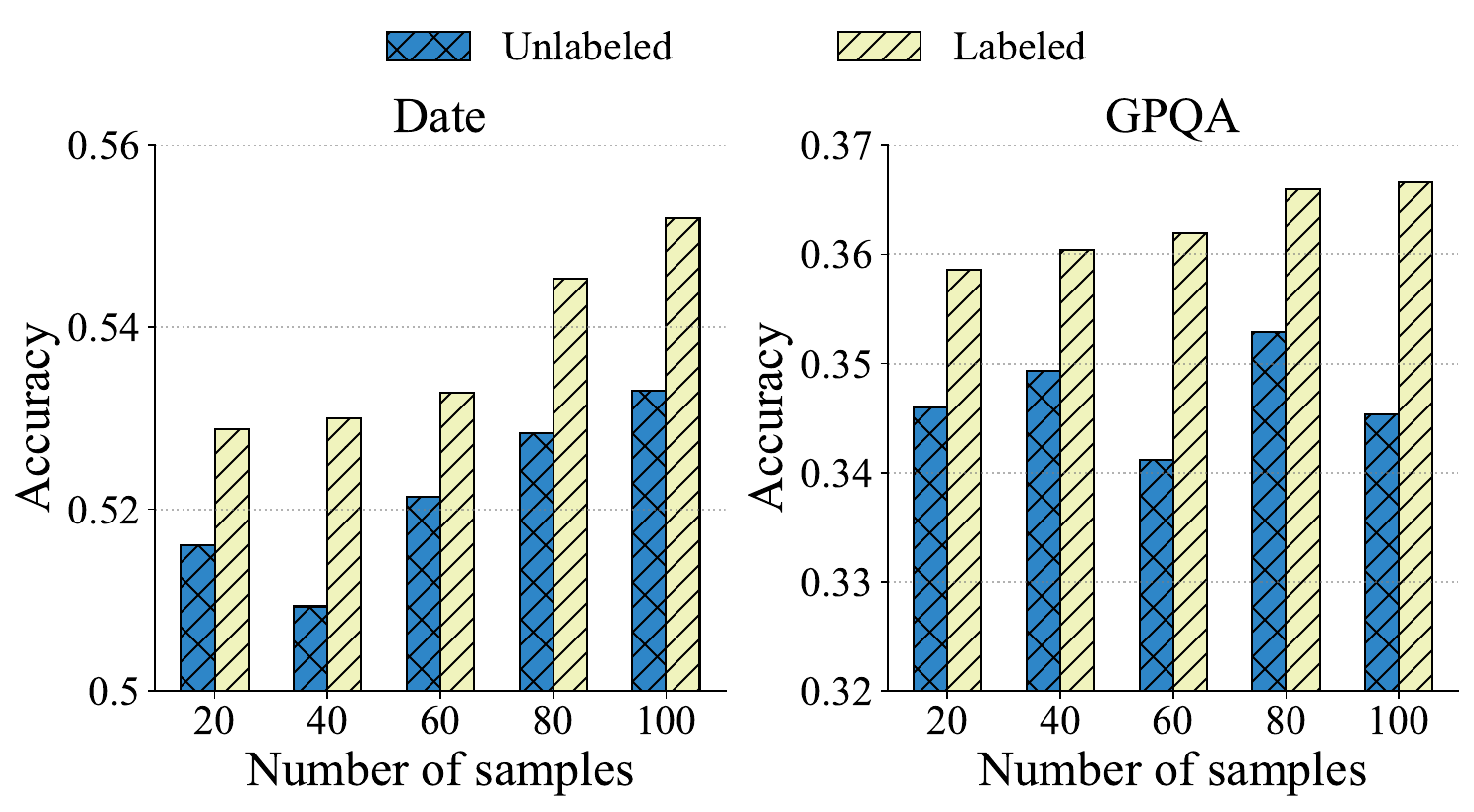}
\caption{Accuracies on Date and GPQA datasets with different amount of demonstrations. The LLM is Gemini 1.5 Flash.}
\vspace{-0.2in}
\label{fig:intro}
\end{figure}

In order to effectively leverage unlabeled samples, a practical solution is to perform pseudo-labeling on them, allowing these samples to serve as demonstrations for many-shot ICL without incurring significant human labeling costs. However, this approach presents two key challenges:
(1) Selection of unlabeled samples for pseudo-labeling: Given a potentially large pool of unlabeled samples, identifying the most informative samples for pseudo-labeling is a critical challenge. This is particularly relevant in tasks where unlabeled samples may contain limited information (e.g., question answering). Selecting the most beneficial unlabeled samples is essential to ensure that pseudo-labeling provides meaningful additional information to enhance task performance.
(2) Impact of unrelated demonstrations: With pseudo-labeled samples, the challenge remains in selecting the most suitable ones as demonstrations, tailored to each test query. 
Since not all pseudo-labeled and labeled samples are beneficial for inference, using unrelated demonstrations for a specific input query may involve unnecessary noise. Therefore, it is essential to adaptively select demonstrations to minimize the impact of irrelevant pseudo-labeled and labeled samples, ensuring that only the most informative and reliable ones are used as demonstrations.

In this work, we propose \textbf{\underline{M}}any-Shot \textbf{\underline{A}}daptive \textbf{\underline{P}}seudo-\textbf{\underline{L}}ab\textbf{\underline{E}}ling, namely \textbf{MAPLE}, a novel influence-based framework to effectively utilize unlabeled samples for many-shot ICL. {We consider the scenario where a smaller set of labeled samples and a potentially large set of unlabeled samples are available, as it is affordable to manually label several samples in practice.}
Our approach tackles the two primary challenges with the following designs:
(1) Influence-Based Sample Selection for Pseudo-Labeling: We leverage the concept of node influence on graphs to identify the most impactful unlabeled samples relative to labeled samples. By constructing a graph encompassing both labeled and unlabeled samples, we exploit their relationships to inform selection. This ensures the pseudo-labeled samples provide the necessary information for inference.
(2) Adaptive Demonstration Selection: For each input query, we adaptively select labeled and pseudo-labeled demonstrations tailored specifically to the query. By identifying and incorporating samples with the most significant influence on the test query, our approach avoids involving unrelated demonstrations and ensures the effective utilization of demonstrations.
These designs collectively maximize the effectiveness of leveraging unlabeled samples, while minimizing reliance on costly labeled data and extending the applicability of LLMs to various real-world tasks with only limited labeled samples. We conduct extensive experiments on various real-world datasets, and the results validate the effectiveness of our framework. Our main contributions are summarized as follows:
\begin{itemize}[leftmargin=0.35cm]
    \item \underline{\textbf{\textit{Novelty.}}} We are the first to explore the capability of many-shot ICL under the pseudo-labeled setting. This novel perspective highlights the potential of leveraging abundant unlabeled samples for pseudo-labeling to alleviate the data bottleneck, broadening the applicability of ICL beyond reliance on labeled data.
    \item \underline{\textbf{\textit{Algorithm.}}} 
We propose an influence-based mechanism to select and pseudo-label only the most impactful unlabeled samples and adaptively select demonstrations for each test query, ensuring strong performance without extensive pseudo-labeling.
    \item  \underline{\textbf{\textit{Practicality.}}} Our approach significantly reduces the need for labeled data in many-shot ICL, thereby improving the feasibility of LLMs in real-world scenarios where labels are scarce. Through extensive experiments on diverse datasets, we demonstrate the superior performance of our framework over other baselines.
\end{itemize}

\section{Related Works}

\mypara{In-Context Learning.}  
In-context learning (ICL) \cite{brown2020language} equips large language models (LLMs) with the ability to leverage a handful of input-output demonstrations for reasoning. ICL has proven remarkably successful in handling complex tasks, including summarization~\cite{jain2023multi,baek2024revisiting} and reasoning~\cite{wang2024mixture, lee2024can,chen2024reckoning}.

To effectively harness ICL, researchers have devised various adaptive strategies for selecting the most suitable demonstrations~\cite{su2022selective,chen2024fastgas}.
Broadly, these methods can be grouped into learning-free and learning-based categories. The former uses heuristic strategies, including semantic similarity \cite{liu2021makes} or entropy \cite{lu2021fantastically}, without actively querying the model~\cite{zhao2021calibrate,agrawal2022context}.  
Learning-based methods, in contrast, incorporate feedback from the LLM into a training loop. 
As a classic example, EPR \cite{rubin2022learning} applies contrastive learning and learns a score based on the probabilities of language model outputs. Moreover, CEIL~\cite{ye2023compositional} selects multiple demonstrations while considering their correlations, and IDS~\cite{qin2023context} iteratively select demonstrations based on zero-shot chain-of-thought reasoning~\cite{wei2022chain}.

\mypara{Many-Shot ICL.}  
With advancements in expanding the context windows of LLMs~\cite{li2024long,bertsch2024context,team2024gemini}, many-shot ICL has emerged as a promising approach, leveraging larger sets of demonstrations to significantly boost performance at the cost of longer input lengths~\cite{li2023context,jiang2024many,huang2024multimodal, chen2025survey}. As a pioneering effort, \citet{agarwal2024many} explored the capability of scaling many-shot ICL to thousands of demonstrations, showcasing its superior performance across a wide variety of tasks. While this work reduces human labeling costs by utilizing model-generated (i.e., pseudo-labeling) answers as demonstrations, it does not address the selection of unlabeled samples (for pseudo-labeling) or the adaptive selection of demonstrations for individual test queries.  
\citet{baek2024revisiting} further highlight the importance of using extensive demonstrations of LLMs in many-shot settings, in contrast to the careful selection of demonstrations. However, the limited effectiveness of existing selection methods stems from their specific design for few-shot ICL~\cite{zhang2025more}. This underscores the need for novel strategies tailored to the unique challenges of many-shot ICL.

\section{Methodology}
\begin{figure*}[t]
\centering
\includegraphics[width=\linewidth]{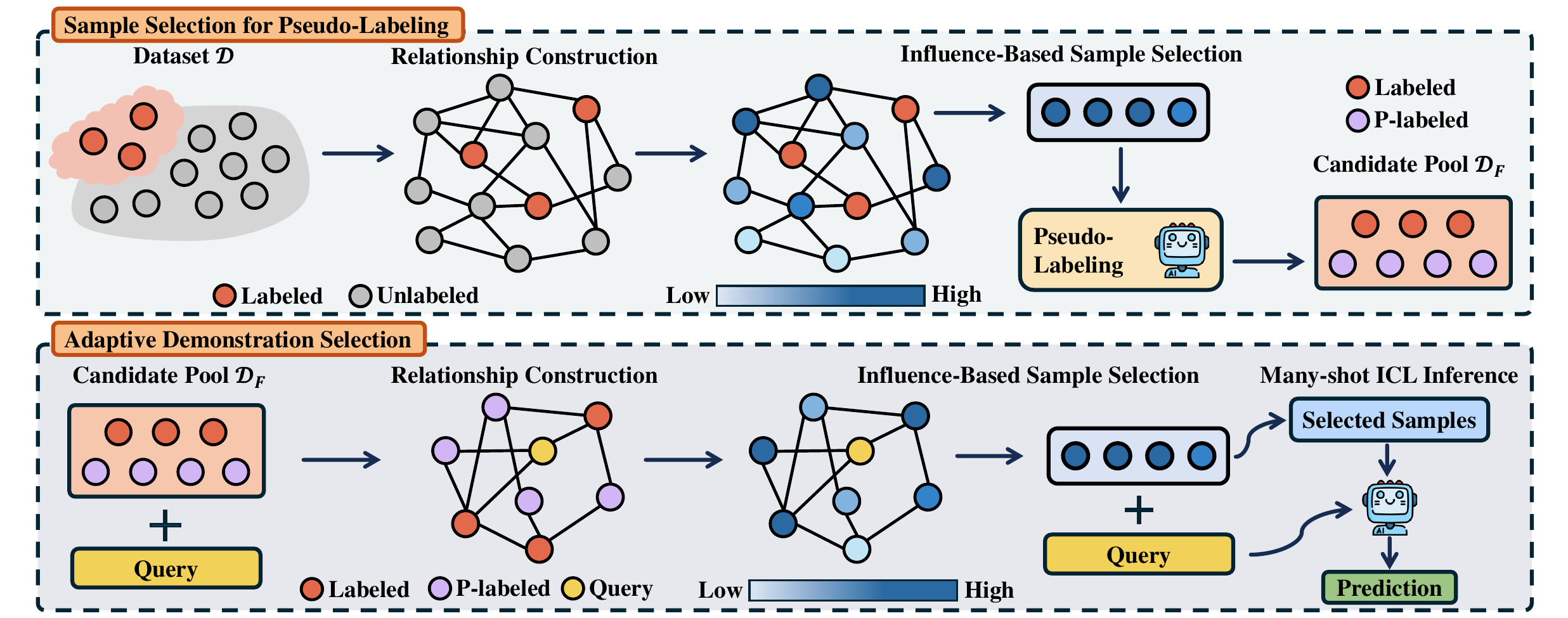}
\caption{Overview of the MAPLE framework. Given a dataset with a small fraction of labeled samples, we select unlabeled samples for pseudo-labeling using the proposed influence score. During inference, we adopt a similar approach to select relevant samples from the candidate pool for each query, filtering out unrelated ones. The remaining samples are then used for many-shot ICL.}
\label{fig:framework}
\end{figure*}

\subsection{Preliminaries}
We begin with formulating the setup of ICL in this work.
Consider a dataset \(\mathcal{D} = \mathcal{D}_L \cup \mathcal{D}_U\), where \(\mathcal{D}_L = \{(x_i, y_i)\}_{i=1}^L\) consists of a small set of labeled samples, and \(\mathcal{D}_U = \{x_j\}_{j=1}^U\) contains a large set of unlabeled samples. Here $x$ represents the textual input (i.e., query), and $y$ is the corresponding label. Generally, \(L \ll U\) holds true, due to the substantial cost associated with human annotations.

The goal of (few-shot) ICL is to select a set of demonstrations $\mathcal{S}\subseteq\mathcal{D}_L$, acting as the additional input for a pre-trained language model  \(\mathcal{M}\). The model $\mathcal{M}$ then utilizes $\mathcal{S}$ along with a test query $x_\text{test}$ to make predictions:
\begin{equation}
    \hat{y}=\mathcal{M}\left(\mathcal{S}(x_{\text{test}}), x_{\text{test}}\right),
\end{equation}
where $\mathcal{S}(x_{\text{test}})$ denotes the set of selected demonstrations, conditioned on the test query $x_\text{test}$.

\subsection{Overview}
As the labeled set $\mathcal{D}_L$ may not be sufficient for many-shot ICL, we propose to identify a subset of unlabeled samples from \(\mathcal{D}_U\) and perform pseudo-labeling on these samples. The overall process is described in Fig.~\ref{fig:framework}. We use $\mathcal{D}_U^{*}\subset \mathcal{D}_U$ to denote the set of unlabeled samples selected for pseudo-labeling. The pseudo-labeling process is executed by a language model \(\mathcal{M}_p\), 
which assigns a pseudo-label \(\hat{y}_j\) to each selected sample \(x_j\):
\begin{equation}
\hat{y}_j = {\mathcal{M}_p}( x_j), \quad \forall x_j \in \mathcal{D}_U^{*}.
\end{equation}
Notably, \(|\mathcal{D}_U^{*}| = P\), where the size $P$ is a hyper-parameter and serves as the pseudo-labeling budget.
The pseudo-labeled samples, along with the labeled samples in $\mathcal{D}_L$, are then aggregated into \(\mathcal{D}_F\), referred to as the candidate demonstration set:
\begin{equation}
\mathcal{D}_F = \{(x_j, \hat{y}_j) \mid x_j \in \mathcal{D}_U^{*}\}\cup\mathcal{D}_L.
\end{equation}
With demonstrations in $\mathcal{D}_F$, we adaptively select a set of demonstrations \(\widetilde{\mathcal{S}}(x_{\text{test}})\subseteq \mathcal{D}_F\) for each input query $x_{test}\in \mathcal{D}_{\text{test}}$, which serves as the many-shot demonstrations for inference. The model $\mathcal{M}$ then uses these demonstrations to make predictions for \(x_{\text{test}}\):
\begin{equation}
\hat{y} ={\mathcal{M}}(\widetilde{\mathcal{S}}(x_{\text{test}}), x_{\text{test}}).
\end{equation}

\subsection{Relationship Construction}
In our framework, we aim to maximally utilize the information from the labeled set $\mathcal{D}_L$ to decide which unlabeled samples to select for pseudo-labeling. Therefore, we first construct a labeled-unlabeled graph \(G = (\mathcal{V}, \mathcal{E})\) that captures the relationships among labeled and unlabeled samples. Here \(\mathcal{V}\) is the set of nodes, and \(\mathcal{E}\) is the set of edges. 
We utilize the entire demonstration pool (i.e., dataset) \(\mathcal{D} = \mathcal{D}_L \cup \mathcal{D}_U\) to build \(G\), and each sample \(x \in \mathcal{D}\) is treated as a node \(v \in \mathcal{V}\). 
For each node \(v_i \in \mathcal{V}\), we identify the \(k\) closest nodes in terms of the relevance score \(r(v_i, v_j)\), as defined in Contriver~\cite{izacard2021unsupervised}, and connect them as its neighbors. The relevance score is calculated from the dot product of the encoded representations of $x_i$ and $x_j$:
\begin{equation}
r(v_i, v_j) =   \langle f_\theta(x_i), f_\theta(x_j) \rangle,
\end{equation}
where $x_i$ and $x_j$ are the corresponding queries of $v_i$ and $v_j$, respectively. $f_\theta(\cdot)$ is the pre-trained encoder. Notably, as the unlabeled samples in $\mathcal{D}_U$ do not contain labels (i.e., $y$), we only use the queries (i.e., $x$) to calculate the relevance score.
Based on the obtained relevance scores $r(v_i, v_j)$, the adjacency matrix \(\bA\) of the labeled-unlabeled graph is obtained by connecting the \(k\) nodes with the largest relevance scores to each node \(v_i\). Formally, the entries of \(\bA\) are calculated as follows:
\begin{equation}
\bA_{ij} = 
\begin{cases} 
r(v_i, v_j), &\text{if } r(v_i, v_j) \in \text{Top-}k (\{r(v_i, v_k)\}_{k=1}^{|\mathcal{V}|}), \\
0, &\text{otherwise},
\end{cases}
\end{equation}
where \(\text{Top-}k(\cdot)\) selects the \(k\) largest values in $\{r(v_i, v_j)\}_{j=1}^{|\mathcal{V}|}$ according to the relevance scores. This ensures that only the most relevant connections are retained in the constructed labeled-unlabeled graph $G$.
In concrete, $G$ enables us to capture the relationships among both labeled and unlabeled samples, which is critical for selecting impactful unlabeled samples for pseudo-labeling.

\subsection{Selecting Unlabeled Samples for Pseudo-Labeling}
Notably, due to the prohibitive cost of labeling, the number of labeled samples is significantly smaller than that of unlabeled samples, i.e., \(|\mathcal{D}_L| \ll |\mathcal{D}_U|\). 
To complement the limited labeled information in $\mathcal{D}_L$, we propose to leverage the concept of \textit{node influence}, which describes the extent to which the representation of a node can be 
 impacted by another node~\cite{xu2018representation,huang2020graph,wang2020unifying}. We first provide the formal definition as follows:
\begin{definition}\label{def}
\textbf{[Node Influence]} The node influence from node $v_i$ to node $v_j$ is defined as $I(v_i,v_j)=\|\partial \bv_i/\partial \bv_j\|$, where $\bv_i$ and $\bv_j$ are the node representations of $v_i$ and $v_j$ learned by the widely adopted neighborhood aggregation mechanism, respectively. $\partial \bv_i/\partial \bv_j$ is a Jacobian matrix, and the norm can be any specific subordinate norm.
\end{definition}
According to Definition~\ref{def}, larger node influence indicates that the representation of a node can more easily affect another node, i.e., having a higher influence. We propose to consider unlabeled samples that have a higher influence on the entire set of labeled samples because these high-influence unlabeled samples are inherently more representative of the underlying data distribution. By pseudo-labeling these impactful samples, we enrich the demonstration pool with samples that reflect critical patterns and relationships within the specific dataset. 
We denote the node set as $\mathcal{V}=\mathcal{V}_L\cup\mathcal{V}_U$, where $\mathcal{V}_L$ and $\mathcal{V}_U$ correspond to samples in $\mathcal{D}_L$ and $\mathcal{D}_U$, respectively.

To effectively estimate the node influence of an unlabeled sample on the entire set of labeled samples, we propose the following theorem that provides a lower bound for the node influence on any set of nodes $\mathcal{V}$ in $G$:
\begin{theorem}\label{theorem:influence}

Consider the node influence from node $u$ to a node set $\mathcal{V}$. Denote the geometric mean of the node influence to all nodes in $\mathcal{V}$ as $I_{\mathcal{V}}(u)=\sqrt[|\mathcal{V}|]{\prod_{i=1}^{|\mathcal{V}|}I(u,v_i)}$, where $v_i$ is the $i$-th node in $\mathcal{V}$. Assume the node degrees are randomly distributed with the mean value as $d$. Then, 
\begin{equation}
    \mathbb{E}(\log I_{\mathcal{V}}(u))\geq\log\widetilde{ P_S}(u,  \mathcal{V})-\log d\cdot \overline{L}_S(u, \mathcal{V}),
\end{equation} where $\overline{L}_S(u,\mathcal{V})$ is the average shortest path distance between $u$ and nodes in $\mathcal{V}$. $\widetilde{P_S}(u,\mathcal{V})$ is the geometric mean of the numbers of shortest paths between $u$ and nodes in $\mathcal{V}$.
\end{theorem}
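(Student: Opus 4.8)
The plan is to first establish a per-node lower bound on the single-pair influence $I(u, v_i)$, then aggregate these bounds into the geometric mean by passing to logarithms, and finally take the expectation over the random degree distribution. Since $\log I_{\mathcal{V}}(u) = \frac{1}{|\mathcal{V}|}\sum_{i=1}^{|\mathcal{V}|} \log I(u, v_i)$, a bound of the form $\log I(u, v_i) \geq \log P_S(u, v_i) - L_S(u, v_i)\log d$ for each target node would, after averaging over $i$, directly produce the two terms $\log \widetilde{P_S}(u,\mathcal{V})$ and $\log d \cdot \overline{L}_S(u,\mathcal{V})$ appearing on the right-hand side.

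For the per-node bound, I would follow the path-expansion argument underlying the influence-distribution analysis of \citet{xu2018representation}. Writing the $\ell$-layer neighborhood-aggregation update as $\bv_i = \sigma\big(\sum_{j} \tilde{A}_{ij} W \bv_j\big)$ with a degree-normalized operator $\tilde{A}$, the chain rule expresses the Jacobian $\partial \bv_u / \partial \bv_{v_i}$ as a sum over all walks connecting $u$ and $v_i$, each walk contributing a product of normalization weights together with the (diagonal) activation-derivative-times-weight factors along its edges. Taking the subordinate norm from Definition~\ref{def} and retaining only the contribution of the walks of minimal length, i.e., the shortest paths, yields a lower bound: there are exactly $P_S(u, v_i)$ such paths, each traversing $L_S(u, v_i)$ edges whose normalization factors are of order $1/d_k$ in the degrees $d_k$ of the nodes along the path. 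Absorbing the activation and weight factors into a constant, standard in this regime and normalizable away, produces $\log I(u, v_i) \geq \log P_S(u, v_i) - \sum_{k} \log d_k$, where the sum runs over the $L_S(u, v_i)$ edges of a shortest path.

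The final step handles the randomness in the degrees. Averaging the per-node bound over $i$ and taking the expectation, each term $-\mathbb{E}[\log d_k]$ is controlled via Jensen's inequality: because $\log$ is concave, $\mathbb{E}[\log d_k] \leq \log \mathbb{E}[d_k] = \log d$, hence $-\mathbb{E}[\log d_k] \geq -\log d$. Summing over the $L_S(u, v_i)$ edges and averaging over the $|\mathcal{V}|$ target nodes then converts the degree sum into $-\log d \cdot \overline{L}_S(u, \mathcal{V})$ and the path-count term into $\log \widetilde{P_S}(u, \mathcal{V})$, completing the inequality.

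The main obstacle I expect is making the per-path lower bound rigorous: isolating the shortest-path contribution as a genuine lower bound on the Jacobian norm requires that the longer-walk terms do not destructively cancel the shortest-path terms and that the activation-derivative and weight factors stay bounded away from zero. This is precisely where the argument leans on the standard assumptions of the influence-distribution framework, and care is needed to ensure the degree factor enters as $\log d$ per edge, matching the exponent $L_S$, rather than per node, so that the Jensen step produces exactly the claimed coefficient of $\overline{L}_S(u, \mathcal{V})$.
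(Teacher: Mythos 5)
Your proposal follows essentially the same route as the paper: a pairwise lower bound $\mathbb{E}[\log I(u,v_i)] \geq \log P_S(u,v_i) - L_S(u,v_i)\cdot\log d$ obtained by expanding the Jacobian of a linear, degree-normalized aggregation as a nonnegative sum over walks and retaining only the shortest-path terms (the paper's Lemma~A.1), followed by averaging over $v_i\in\mathcal{V}$ so that the path counts collapse into the geometric mean $\widetilde{P_S}(u,\mathcal{V})$ and the distances into $\overline{L}_S(u,\mathcal{V})$. One detail where you are actually cleaner than the paper: you apply Jensen as $\mathbb{E}[\log d_k]\leq\log\mathbb{E}[d_k]=\log d$ directly to each degree factor, which gives the needed lower bound on $\mathbb{E}[\log I]$ in the correct direction, whereas the paper first lower-bounds $\mathbb{E}[\|\partial\bh_i/\partial\bh_j\|]$ and then writes $\mathbb{E}(\log I)=\log\mathbb{E}(\|\partial\bh_i/\partial\bh_j\|)$, an identification that Jensen only supplies as an inequality the other way.
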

We provide the proof in Appendix~\ref{appendix:D}.
From Theorem~\ref{theorem:influence}, we can conclude that in order to select nodes that are most influential to a set of nodes, 
we need to consider both the shortest path distance and the number of shortest paths. 

Specifically, for each unlabeled sample \(x \in \mathcal{D}_U\), which corresponds to $v\in\mathcal{V}_{U}$, we define its influence score on the labeled set \(\mathcal{D}_L\) as
\begin{equation}\label{eqn:influence_score}
    s(\mathcal{V}_L, v)=\log\widetilde{P_S}(v,\mathcal{V}_L)-\log d\cdot \overline{L}_S(v, \mathcal{V}_L),
\end{equation}
where $\mathcal{V}_{U}$ and \(\mathcal{V}_L\) denote the node sets of unlabeled samples and labeled samples, respectively.

We rank all unlabeled nodes and select the top $P$ nodes with the highest values of \(s(\mathcal{V}_L, v)\) for pseudo-labeling, denoted as $\mathcal{V}_U^{*}$:
\begin{equation}
    \mathcal{V}_U^{*}=\argmax_{v\in\mathcal{V}_U, |\mathcal{V}_U^{*}|=P} \sum\limits_{v\in\mathcal{V}_U}s(\mathcal{V}_L, v).
\end{equation}
The pseudo-labels for selected samples in $\mathcal{V}_U^{*}$ are generated using a language model \(\mathcal{M}_p\):
\begin{equation}
\hat{y}_j = {\mathcal{M}_p}(x_j), \quad \forall x_j \in \mathcal{D}_U^{*},
\end{equation}
where \(\mathcal{D}_U^{*}\) denotes the samples that corresponding to  nodes in $\mathcal{V}_U^{*}$.
The pseudo-labeled samples, along with the assigned pseudo-labels, are then aggregated with the labeled set $\mathcal{D}_L$ to constitute the final candidate demonstration pool $\mathcal{D}_F$:
\begin{equation}
\mathcal{D}_F = \{(x_j, \hat{y}_j) \mid v_j \in \mathcal{V}_U^{*}\} \cup \mathcal{D}_L.
\end{equation}
This process ensures that the most influential unlabeled samples are identified and pseudo-labeled, resulting in a high-quality demonstration pool from which the demonstrations for all input queries can be selected.

\subsection{Adaptive Demonstration Selection}
Now $\mathcal{D}_F$ contains only labeled and pseudo-labeled samples, which exhibit high influence and can serve as demonstrations for many-shot ICL. However, it still remains unsolved which samples will ultimately contribute meaningfully to the prediction of a specific test query $x_\text{test}$ by the LLM. To adaptively select the final demonstration set from $\mathcal{D}_F$, we again utilize the concept of node influence. 
 We first establish a pseudo-labeled graph  \(G'(x_\text{test}) = (\mathcal{V}', \mathcal{E}')\) tailored for each test query $x_\text{test}$. $G'(x_\text{test})$ consists of all samples in \(\mathcal{D}_F\), along with $x_\text{test}$, thereby $|\mathcal{V}'|=|\mathcal{D}_F|+1$. The edges \(\mathcal{E}'\) in $G'$ are constructed in the same pattern as the labeled-unlabeled graph, leveraging the relevance scores between any pair of nodes. Notably, as $G'$ involves labeled samples and pseudo-labeled samples, we calculate the relevance scores based on both the queries (i.e., $x$) and labels (i.e., $y$):
 \begin{equation}
     \tilde{r}(v_i, v_j) =
     \begin{cases} 
     \langle f_\theta(x_i, \hat{y}_i), f_\theta(x_j, \hat{y}_j) \rangle, &\text{if } x_j\neq x_\text{test}, \\
      \langle f_\theta(x_i), f_\theta(x_j) \rangle, &\text{otherwise,}
     \end{cases}
 \end{equation}
With the calculated relevance scores, we construct the pseudo-labeled graph $G'(x_\text{test})$. The adjacency matrix is obtained  as follows:
\begin{equation}
\bA'_{ij} = 
\begin{cases} 
\tilde{r}(v_i, v_j),&\text{if } \tilde{r}(v_i, v_j) \in \text{Top-}k\left(\{\tilde{r}(v_i, v_j)\}_{j=1}^{|\mathcal{V'}|}\right),\\
0,&\text{otherwise},
\end{cases}
\end{equation}
With the pseudo-labeled graph, we evaluate the influence of the labeled and pseudo-labeled samples on the test query $x_\text{test}$ to select demonstrations that are highly relevant to the query.
We aim to select nodes in $\mathcal{V}'$ whose influence on the test node exceeds the average influence of labeled nodes. 
Formally, we estimate the influence of each node \(v \in \mathcal{V}' \setminus \{v_{\text{test}}\}\) on \(v_{\text{test}}\), based on Theorem~\ref{theorem:influence}. Notably, the calculation becomes a specific case in Theorem~\ref{theorem:influence}, where the node set $\mathcal{V}$ only involves one node. The score is denoted as $s(v, v_\text{test})$.
In this manner, the demonstration set \(\mathcal{S}(x_{\text{test}})\) is then constructed as follows:
\begin{equation}
\begin{aligned}
\mathcal{S}(x_{\text{test}})& = \argmax_{\mathcal{V}_S \subseteq \mathcal{V}' \setminus \{v_{\text{test}}\}} \sum\limits_{v\in\mathcal{V}_S} s(v, v_\text{test}),\\
&\text{where} \ \ |\mathcal{V}_S|=\alpha\cdot|\mathcal{V'}|.
\end{aligned}
\end{equation}
 $\alpha\in\mathbb{R}$ is a hyperparameter to control the percentage of samples we would like to include in $\mathcal{S}(x_{\text{test}})$. 
The final demonstration set \(\mathcal{S}(x_{\text{test}})\) is then used as input to the pretrained language model \(\mathcal{M}\) for predicting the output of \(x_{\text{test}}\):
\begin{equation}
\hat{y}_\text{test} = {\mathcal{M}}(\mathcal{S}(x_{\text{test}}), x_{\text{test}}).
\end{equation}

\begin{figure*}[!t]
\centering
\includegraphics[width=\linewidth]{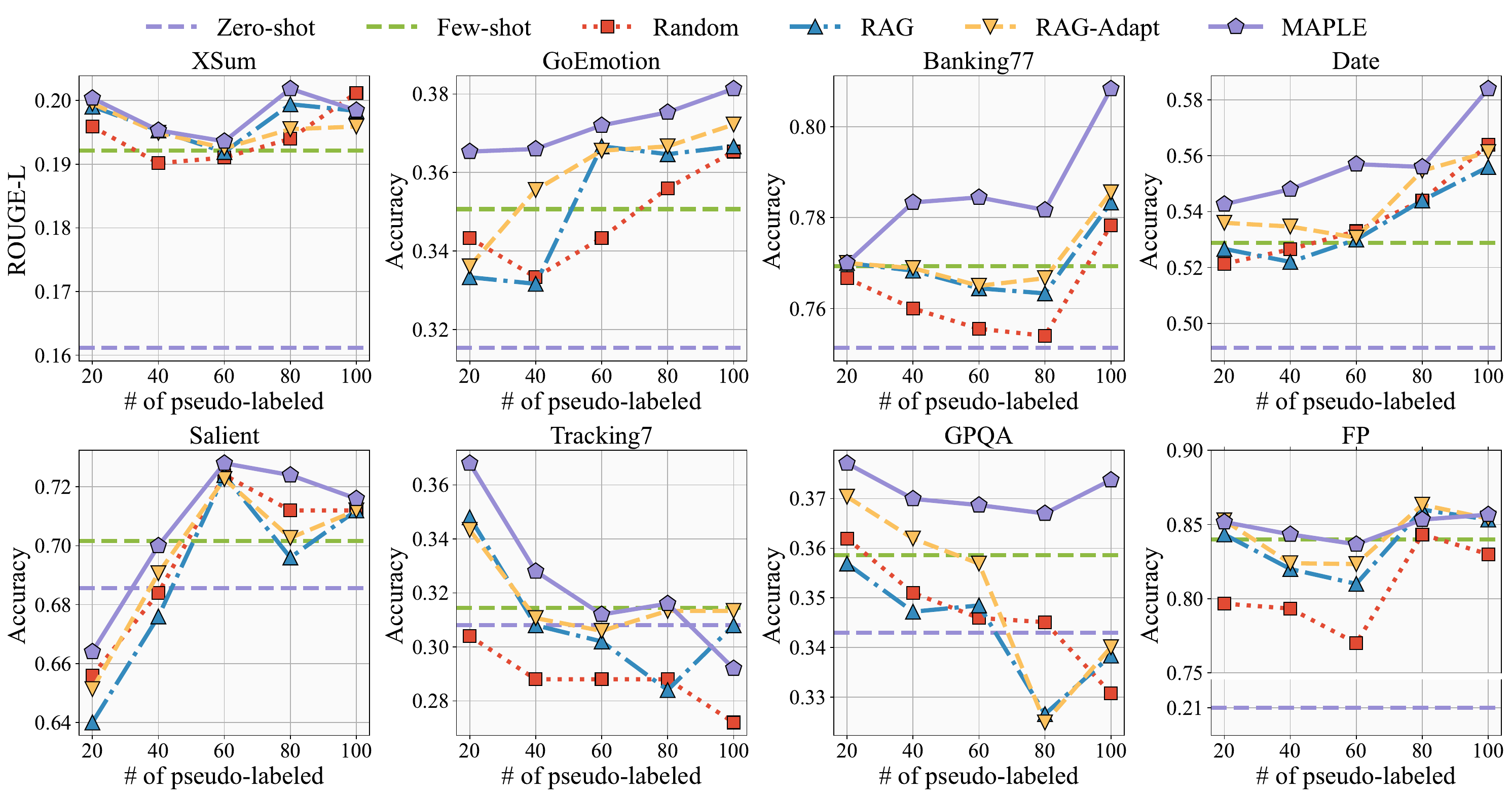}
\caption{Performance comparison of various sample selection strategies in a many-shot ICL setting using Gemini 1.5 Flash across multiple datasets. 'Zero-shot' refers to the query LLM being provided only with the task instruction. We randomly selected 20 labeled samples to construct $\mathcal{D}_L$, and the results obtained using just $\mathcal{D}_L$ are presented as 'Few-shot'. Based on $\mathcal{D}_L$, we compare MAPLE with other pseudo-labeling baselines with different $\mathcal{D}_U^{*}$.}
\label{fig:main}
\end{figure*}

\subsection{Computation Cost Analysis}

Our proposed method, MAPLE, performs graph construction in both selecting relevant unlabeled samples and adaptive demonstration selection. In this section, we analyze the computational cost associated with graph construction.
Given a graph $\mathcal{G}=(\mathcal{V},\mathcal{E})$, the graph construction requires the computation of the relevance score $r$ among any pair of nodes, which will be $\mathcal{O}(|\mathcal{V}|^2)$. To compute shortest paths, we use breadth-first search for each node, and the cost is $\mathcal{O}(|\mathcal{V}|+|\mathcal{E}|) = \mathcal{O}(|\mathcal{V}|)$ as $\mathcal{E}=\mathcal{O}(k|\mathcal{V}|)$. Therefore, the whole shortest path computation cost is $\mathcal{O}(|\mathcal{D}_L||\mathcal{V}|)$. Notably, the above cost is only required \textit{once} before inference, and does not scale with the number of test-time queries. With more queries involved during the test, the computational cost of the graph becomes negligible.
As for adaptive demonstration selection, we further note that adaptive demonstration selection is an optional component that offers a trade-off between efficiency and performance, which will be discussed in Section~\ref{sec:kvcache}.   

\section{Experiments}

\subsection{Experimental Settings}

\noindent\textbf{Baselines.} 
In our experiments, we consider baselines that focus on how to select unlabeled demonstrations for pseudo-labeling to further improve many-shot ICL performance, given a fixed set of labeled instances. We compare our approach against the following baseline methods: (1) \textbf{Zero-shot}: This method provides only the task instruction to the LLM, without any demonstrations. (2) \textbf{Few-shot}: This approach incorporates only the labeled demonstrations for the LLM. (3) \textbf{Random}: This method randomly selects demonstrations for pseudo-labeling. (4) \textbf{RAG}: This approach uses Contriever~\cite{izacard2021unsupervised} to compute embeddings of both the query and the unlabeled demonstrations, selecting the most similar demonstrations for pseudo-labeling. (5) \textbf{RAG-Adapt}: This approach adds an adaptive demonstration selection process based on RAG based on embeddings of the query and the candidate demonstration pool $\mathcal{D}_F$.

\vspace{.03in}
\noindent\textbf{Datasets.}
We evaluate the effectiveness of our approach on eight datasets across four tasks. (1) \textbf{Summarization}: This task assesses the ability of models to generate concise and coherent summaries from articles. We use the widely adopted XSum dataset~\cite{narayan2018don} for evaluation, with the ROUGE-L score as our evaluation metric. (2) \textbf{Reasoning}: This task tests the models' capacity for complex reasoning. We evaluate performance on three challenging datasets (Date, Salient, and Tracking7) from the Big Bench Hard (BBH)~\cite{suzgun2023challenging}, following the experimental setup of the Long-Context Frontiers (LOFT) benchmark~\cite{lee2024can}. (3) \textbf{Classification}: In this task, we focus on Financial PhraseBank (FP) sentiment analysis~\cite{malo2014good, wei2025large} and a subset of challenging benchmark datasets~\citep{li2024long} that are specifically designed for ICL tasks with diverse classes and long inputs, including Banking77 and GoEmotion. (4) \textbf{Question Answering}: We evaluate performance on the Google-Proof QA (GPQA) dataset~\cite{rein2023gpqa}, a multiple-choice QA benchmark. The questions are designed to challenge graduate-level reasoning in subjects such as biology and chemistry. The detailed descriptions of the used datasets are provided in Appendix~\ref{appendix:data}.

\vspace{.03in}
\noindent\textbf{Implementation.} In our main experiment, we set $k=20$ and $\alpha=0.75$, and we sample 1,000 demonstrations for labeling and 300 for testing. For datasets with fewer than 1,000 training samples or fewer than 300 test samples, we use the entire dataset. We randomly select 20 demonstrations to form $\mathcal{D}_L$. Unless specified otherwise, we evaluate the many-shot ICL performance of the Gemini 1.5 Flash~\cite{team2024gemini} model with 1M token context length. We apply the Contriver~\cite{izacard2021unsupervised} as $f_\theta(\cdot)$. We conduct experiments with pseudo-labeled sizes ranging from 20 to 100. For most tasks, while increasing the number of demonstrations further improves performance, many-shot ICL reaches a sufficiently good performance with around 100 to $2^7$ demonstrations~\cite{agarwal2024many}. The prompts used to elicit responses from ICL are provided in the Appendix~\ref{sec:prompt}. Each experiment is run five times, and the average performance is reported.

\vspace{.05in}
\subsection{Main Result}

As shown in Fig.~\ref{fig:main}, we evaluate our method, MAPLE, in comparison to all baselines across eight datasets. From the results, we observe the following:
\ding{182} \textbf{Superior Performance Across Datasets:}  Our method consistently outperforms all other baselines across all eight datasets, showing significant improvements with various numbers of pseudo-labeled demonstrations. This highlights the effectiveness of our framework in selecting suitable unlabeled samples for pseudo-labeling in many-shot ICL.
\ding{183} \textbf{Exceptional Results on Complex Tasks:}  MAPLE demonstrates particularly strong performance on complex datasets such as Banking77, Data, and GPQA. This suggests that in tasks requiring nuanced understanding, our framework’s ability to select appropriate unlabeled samples provides a clear advantage over baseline methods.
\ding{184} \textbf{Limited Gains in Certain Datasets:} The inclusion of additional pseudo-labeled samples does not always lead to performance improvements. For datasets like \textbf{Tracking7} and \textbf{XSum}, the limited benefits can be attributed to low-quality pseudo-labeled samples that cannot effectively assist in the inference process.
\ding{185} \textbf{Effectiveness of ICL Settings:}   Across all datasets, the few-shot setting consistently outperforms the zero-shot setting, demonstrating the value of in-context learning. Furthermore, involving more pseudo-labeled samples selected by our framework leads to notable performance enhancements, underscoring the effectiveness of leveraging pseudo-labeled data in many-shot ICL.
\ding{186} {\textbf{Performance Degradation with More Pseudo-labeled Demonstrations:} For certain tasks, such as \textbf{Tracking77} and \textbf{Salient}, we observe a performance drop as the number of pseudo-labeled demonstration increases. This can be attributed to the fact that as more demonstrations are included, the influence of noisy labels becomes more pronounced. Additionally, each query does not necessarily require a large number of samples, and the inclusion of irrelevant examples can degrade performance.}

\subsection{Impact of More Demonstrations}
Given that long-context LLMs, such as Gemini 1.5 Flash, can process over 100 million tokens, we extend our experiments by increasing the number of labeled demonstrations to 100, while varying the number of pseudo-labeled demonstrations from 50 to 250. According to results presented in Fig.~\ref{fig:moredemo}, we first observe that incorporating a larger number of demonstrations leads to improved performance, which validates the effectiveness of many-shot ICL. Furthermore, MAPLE consistently outperforms all baselines as the number of demonstrations increases, indicating the scalability of MAPLE in scenarios with a large number of demonstrations. 

\begin{figure}[!t]
\centering
\includegraphics[width=\linewidth]{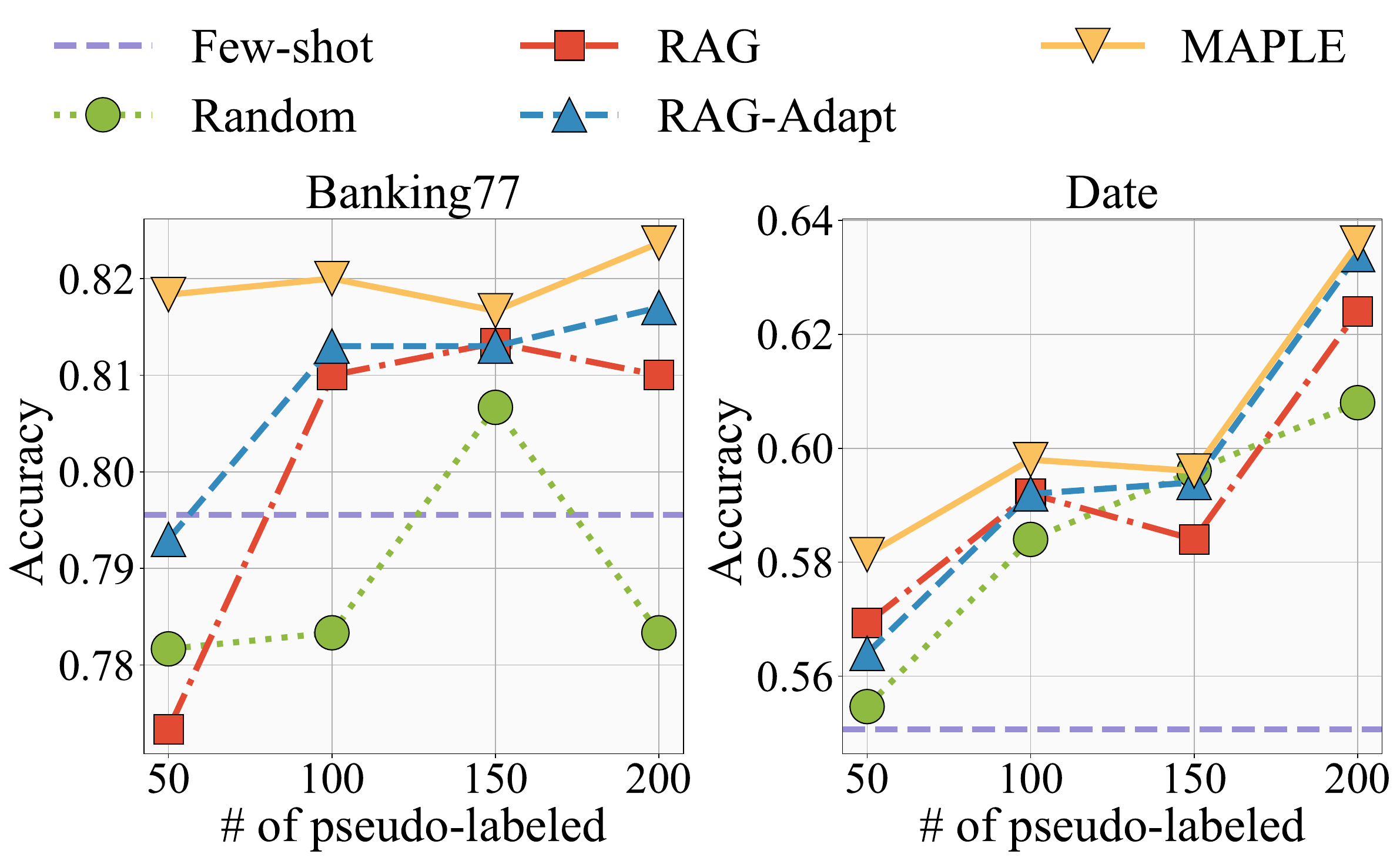}
\caption{Results of MAPLE compared to baselines on two datasets with a larger number of demonstrations. We increase the size of $\mathcal{D}_L$ to 100 and compare performance for different sizes of $\mathcal{D}_U^{*}$ (50, 100, 150, and 200).}
\vspace{-0.2in}
\label{fig:moredemo}
\end{figure}

\begin{table*}[!t]
		\setlength\tabcolsep{7pt}
\centering
\small
\caption{Results of many-shot ICL across three tasks (extreme classification, reasoning, and question answering) with different LLMs (Gemini 1.5 Pro and Gemini 1.5 Flash).  We report the performance in \% under different numbers of $|\mathcal{D}_U^{*}|$. MAPLE consistently outperforms the baseline across all tasks and LLM variants, demonstrating its ability to effectively leverage pseudo-labeled demonstrations.}
\begin{tabular}{c c c c c c c c c}
\toprule
\multirow{2}{*}{\textbf{Task}} & \multirow{2}{*}{\textbf{Model}} & \multirow{2}{*}{\textbf{Method}} &  \multicolumn{5}{c}{\textbf{\# of Pseudo-labeled Demonstrations}} & \multirow{2}{*}{\textbf{Avg.}}\\\cmidrule(lr){4-8}
     & & &20 &40 &60 &80 &100      \\ \midrule    
\multirow{6}{*}{Banking77} & \multirow{3}{*}{Gemini Flash} & Random & 76.7 $\pm$ 3.9 & 76.0 $\pm$ 4.4 & 75.6 $\pm$ 3.1 & 75.4 $\pm$ 3.7 & 77.8 $\pm$ 3.2 & 76.3\\
 &  & RAG & 77.0 $\pm$ 2.3 & 76.8 $\pm$ 2.3 & 76.4 $\pm$ 2.4 & 76.3 $\pm$ 3.4 & 78.3 $\pm$ 4.1 & 77.0\\
 &  &\cellcolor{gray!20}MAPLE &\cellcolor{gray!20}77.0 $\pm$ 2.8 &\cellcolor{gray!20}78.3 $\pm$ 2.6 &\cellcolor{gray!20}78.4 $\pm$ 2.6 &\cellcolor{gray!20}78.1 $\pm$ 2.7 &\cellcolor{gray!20}80.8 $\pm$ 3.4 &\cellcolor{gray!20}\textbf{78.5}\\\cmidrule(lr){2-9}
& \multirow{3}{*}{Gemini Pro} & Random & 74.7 $\pm$ 2.3 & 76.7 $\pm$ 1.9  & 79.3 $\pm$ 2.2 & 78.5 $\pm$ 2.0 & 78.3 $\pm$ 2.1 & 77.5\\
& & RAG & 76.3 $\pm$ 2.5 & 78.3 $\pm$ 2.0  & 78.7 $\pm$ 1.5 & 79.3 $\pm$ 2.8 & 78.8 $\pm$ 2.5 & 78.3\\
 &  &\cellcolor{gray!20}MAPLE &\cellcolor{gray!20}79.3 $\pm$ 2.2 &\cellcolor{gray!20}79.7 $\pm$ 1.5 &\cellcolor{gray!20}80.3 $\pm$ 2.3 &\cellcolor{gray!20}81.3 $\pm$ 1.5 &\cellcolor{gray!20}80.7 $\pm$ 1.9 &\cellcolor{gray!20}\textbf{80.3}\\\midrule

 \multirow{6}{*}{Date}  & \multirow{3}{*}{Gemini Flash} & Random & 52.1 $\pm$ 1.4 & 52.7 $\pm$ 1.5 & 53.3 $\pm$ 0.6 & 54.4 $\pm$ 1.6 & 56.4 $\pm$ 1.0 & 53.8\\
 &  & RAG & 52.7 $\pm$ 1.0 & 52.2 $\pm$ 1.5 & 53.0 $\pm$ 0.9 & 54.4 $\pm$ 1.4 & 55.6 $\pm$ 1.8 & 53.6\\
 &  &\cellcolor{gray!20}MAPLE &\cellcolor{gray!20}54.3 $\pm$ 1.2 &\cellcolor{gray!20}54.8 $\pm$ 1.3 &\cellcolor{gray!20}55.7 $\pm$ 0.9 &\cellcolor{gray!20}55.6 $\pm$ 1.8 &\cellcolor{gray!20}58.4 $\pm$ 2.4 &\cellcolor{gray!20}\textbf{55.8}\\\cmidrule(lr){2-9}
 & \multirow{3}{*}{Gemini Pro} & Random & 64.4 $\pm$ 1.0 & 66.2 $\pm$ 1.2 & 66.8 $\pm$ 1.8 & 67.2 $\pm$ 1.1 & 68.0 $\pm$ 0.8 & 66.5\\
 &  & RAG & 66.4 $\pm$ 1.6 & 66.8 $\pm$ 1.5 & 66.0 $\pm$ 1.2 & 67.2 $\pm$ 0.5 & 68.4 $\pm$ 1.4 & 67.0\\
 &  &\cellcolor{gray!20}MAPLE &\cellcolor{gray!20}67.6 $\pm$ 0.8 &\cellcolor{gray!20}67.6 $\pm$ 1.7 &\cellcolor{gray!20}67.2 $\pm$ 0.6 &\cellcolor{gray!20}68.8 $\pm$ 1.0 &\cellcolor{gray!20}68.8 $\pm$ 3.0 &\cellcolor{gray!20}\textbf{68.0}\\\midrule

 \multirow{6}{*}{GPQA}  & \multirow{3}{*}{Gemini Flash} & Random & 36.2 $\pm$ 0.6 & 35.1 $\pm$ 1.9 & 34.6 $\pm$ 3.1 & 34.5 $\pm$ 2.3 & 33.1 $\pm$ 2.0 & 34.7\\ 
  &  & RAG & 35.7 $\pm$ 1.2 & 34.7 $\pm$ 1.3 & 34.8 $\pm$ 0.5 & 32.7 $\pm$ 1.4 & 33.8 $\pm$ 1.0 & 34.3\\ 
 &  &\cellcolor{gray!20}MAPLE &\cellcolor{gray!20}37.7 $\pm$ 1.7 &\cellcolor{gray!20}37.0 $\pm$ 0.4 &\cellcolor{gray!20}36.9 $\pm$ 1.3 &\cellcolor{gray!20}36.7 $\pm$ 1.9 &\cellcolor{gray!20}37.4 $\pm$ 1.6 &\cellcolor{gray!20}\textbf{37.1}\\\cmidrule(lr){2-9}
 & \multirow{3}{*}{Gemini Pro} & Random & 41.4 $\pm$ 0.8 & 41.4 $\pm$ 1.3 & 43.3 $\pm$ 2.0 & 42.4 $\pm$ 2.0 & 42.9 $\pm$ 1.1 & 42.3\\
 &  & RAG & 43.9 $\pm$ 0.3 & 43.3 $\pm$ 2.5 & 41.9 $\pm$ 1.5 & 41.9 $\pm$ 1.6 & 43.3 $\pm$ 1.0 & 42.9\\
 &  &\cellcolor{gray!20}MAPLE &\cellcolor{gray!20}44.4 $\pm$ 2.3 &\cellcolor{gray!20}44.4 $\pm$ 1.0 &\cellcolor{gray!20}44.9 $\pm$ 2.0 &\cellcolor{gray!20}43.8 $\pm$ 0.6 &\cellcolor{gray!20}43.9 $\pm$ 1.6 &\cellcolor{gray!20}\textbf{44.3}\\

 \bottomrule
\label{tab:pro}
\vspace{-.2in}
\end{tabular}
\end{table*}

\vspace{-0.05in}
\subsection{Results of Different LLMs}
In our main experiments, we use Gemini 1.5 Flash. In this section, we adopt a stronger LLM, Gemini 1.5 Pro, to showcase the performance of MAPLE under different LLMs. 
The experimental results in Table~\ref{tab:pro} highlight several key observations. First, the proposed MAPLE method consistently outperforms the baseline across all tasks and LLM variants, demonstrating its ability to effectively leverage pseudo-labeled demonstrations. For instance, in the Banking77 task, MAPLE achieves an average accuracy improvement of 1.7\% with Gemini Pro and 1.5\% with Gemini Flash compared to the baseline. Second, MAPLE exhibits a notable advantage in scalability, showing steady improvements in performance as the number of pseudo-labeled demonstrations increases. This trend is particularly evident in the Date and GPQA tasks, where MAPLE achieves its highest accuracy at 100 demonstrations, outperforming the baseline by a significant margin. Finally, the stronger Gemini Pro model consistently amplifies the performance of MAPLE across all tasks, indicating that the method’s benefits are further enhanced when paired with more advanced LLMs. These observations validate the robustness and adaptability of MAPLE in varying contexts.

\subsection{KV Cache Ablation}\label{sec:kvcache}
\vspace{.04in}
In MAPLE, we provide personalized demonstrations for each query as the many-shot ICL prompt, which can be time-consuming during inference. In this section, we examine a variant of MAPLE that omits adaptive demonstration selection. In this setup, the labeled and pseudo-labeled demonstrations are fixed across all queries, allowing us to apply the KV Cache~\cite{pope2023efficiently} to cache the demonstrations in the LLM before inference for improved efficiency. To illustrate the effect of KG Cache in our framework, we provide a detailed analysis regarding the FLOPs with KV Cache in Appendix~\ref{appendix:kv}.

\vspace{.04in}
Furthermore, we investigate the trade-off between efficiency and performance of our framework MAPLE when using KV Cache. Particularly, we conduct experiments on datasets XSum and Date. We additionally consider using various numbers of pseudo-labeled demonstrations, and report the performance and inference time. From the results presented in Table~\ref{tab:kv}, we observe that applying KV Cache can reduce inference time as expected. Moreover, the efficient improvement is more significant with a larger number of pseudo-labeled demonstrations. However, including more demonstrations as input for each query introduces more noise and irrelevant information, which negatively impacts performance. Moreover, we also note that the efficiency gains from KV Cache are not significant. This is potentially due to the time required for the internal loading of the KV cache in  API-based models, such as Gemini 1.5 Flash. 

\begin{table}[!t]
\centering
\small
\caption{Comparison of MAPLE with its variant, which removes the adaptive demonstration selection component. We report the average performance and the inference time (s) per query on two datasets XSum and Date.}
\begin{tabular}{c c c c c }
\toprule
\textbf{Task} & \textbf{KV} &   \multicolumn{3}{c}{\textbf{\# of Pseudo-labeled}} \\ \midrule 
\multirow{3.5}{*}{XSum}&  & 20 & 60 & 100 \\ \cmidrule(lr){2-5}   
 & \checkmark & 0.200/1.89 &  0.191/3.57 &  0.198/4.08  \\ 
 & \text{-} & 0.199/1.93 &  0.189/3.74 &  0.195/4.50 \\\midrule
 \multirow{3.5}{*}{Date}&  & 100 & 250 & 300 \\ \cmidrule(lr){2-5}   
 & \checkmark & 0.584/0.94 & 0.596/1.92 &  0.636/1.96  \\
 & \text{-} & 0.578/0.91 &  0.589/1.99 &  0.622/2.04 \\
 \bottomrule
\end{tabular}
\label{tab:kv}
\end{table}

\begin{figure}[!t]
\centering
\includegraphics[width=\linewidth]{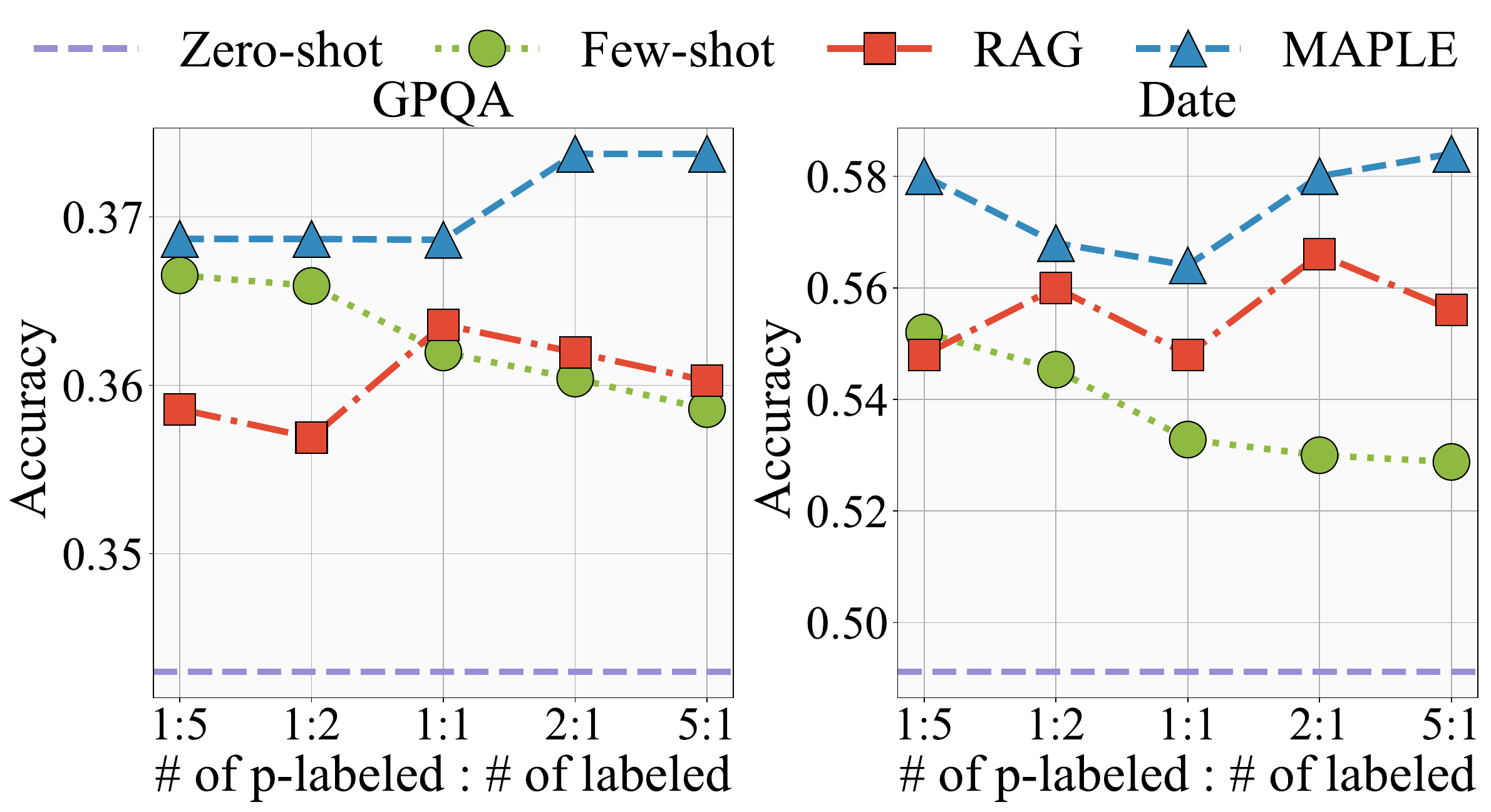}
\caption{The results of varying the fraction of pseudo-labeled demonstrations. We fix the size of the candidate demonstration pool $\mathcal{D}_F$ as 120 and adjust the proportion of pseudo-labeled samples, while randomly selecting the labeled samples.}
\label{fig:diff_frac}
\end{figure}
\subsection{Fraction of Pseudo-labeled Demonstrations}

In this subsection, we conduct experiments to evaluate the impact of noisy pseudo-labeled data on many-shot ICL performance. In this manner, we further assess the quality of demonstrations selected by MAPLE in comparison to other baselines. 
In particular, we fix the total number of demonstrations at 120 and adjust the fraction of labeled demonstrations versus pseudo-labeled demonstrations (ranging from 20 to 100 labeled). The results are presented in Fig.~\ref{fig:diff_frac}.
We can obtain the following observations: (1) The experimental results demonstrate that MAPLE consistently outperforms baseline methods across all fractions of labeled and pseudo-labeled demonstrations, highlighting its effectiveness in selecting high-quality pseudo-labeled data. (2) Increasing the fraction of labeled demonstrations generally improves performance, as higher-quality labeled data provides stronger guidance. (3) When the number of labeled demonstrations decreases, appropriately selected pseudo-labeled data compensates for the loss, maintaining or even improving performance. This is particularly evident in cases where pseudo-labeled data provides additional information to offset the reduction in labeled examples, showcasing the robustness and adaptability of MAPLE in leveraging both labeled and pseudo-labeled demonstrations effectively.

\begin{figure}[!t]
\centering
\includegraphics[width=\linewidth]{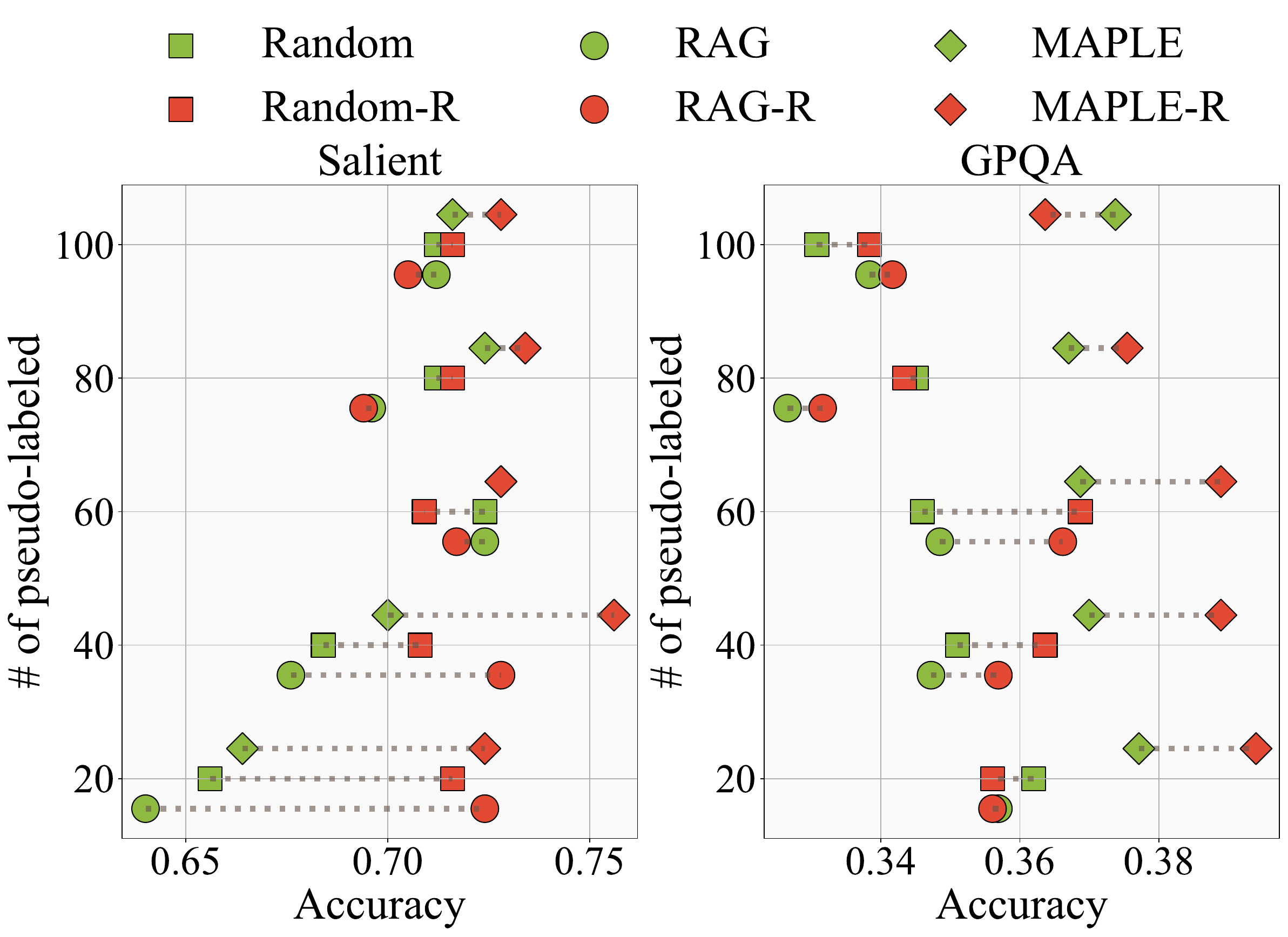}
\caption{Impact of the order of the two sources of samples. Green dots represent methods with the default order, where labeled demonstrations appear first, and pseudo-labeled demonstrations are positioned closer to the query. Red dots represent methods with the order reversed, where pseudo-labeled demonstrations appear first and labeled demonstrations are positioned at the back.}
\vspace{-0.2in}
\label{fig:order}
\end{figure}
\subsection{Study of the Order of Demonstrations}
In our main experiments, we put labeled demos in the front of the input and pseudo-labeled demos in the back of the input and closer to the query. We examine how the order of labeled and pseudo-labeled demonstrations impacts many-shot ICL performance. 
The baseline setup places labeled demonstrations at the front and pseudo-labeled ones closer to the query, while the alternative swaps their positions. Results in Fig.~\ref{fig:order} reveal that placing labeled demonstrations near the query generally improves performance, highlighting the importance of proximity to high-quality data. However, this improvement diminishes as the number of pseudo-labeled demonstrations increases, indicating that the influence of noisy pseudo-labeled data becomes more dominant with higher proportions. These findings emphasize the significance of demonstration order in enhancing ICL outcomes.

\section{Conclusion}
In this work, we focus on enhancing many-shot ICL performance in resource-constrained tasks. we propose a novel adaptive pseudo-labeling framework for many-shot ICL, which selects the most impactful unlabeled samples for pseudo-labeling. Additionally, we introduce an adaptive method to select both pseudo-labeled and labeled samples as demonstrations for LLM input. Extensive experiments across various datasets validate the effectiveness of our framework. In future work, we aim to further explore the interaction between labeled and pseudo-labeled samples and its impact on ICL performance. We believe this will help identify the most useful samples for pseudo-labeling.

\section*{Acknowledgements}

This work is supported in part by the National Science Foundation (NSF) under grants IIS-2006844, IIS-2144209, IIS-2223769, CNS-2154962, BCS-2228534, CMMI-2411248, ECCS-2029978, CPS-2313110, ECCS-2143559, AST-2132700, and ECCS-2033671; the Office of Naval Research (ONR) under grant N000142412636; and the Commonwealth Cyber Initiative (CCI) under grant VV-1Q24-011, and the research gift funding from Kneron.

\section*{Impact Statement}

This paper presents work whose goal is to improve the performance of many-shot ICL in resource-constrained settings where manual labeling is expensive or infeasible. There are many potential societal consequences 
of our work, none of which we feel must be specifically highlighted here.

\bibliography{ref}
\bibliographystyle{icml2025}

\newpage
\appendix
\onecolumn

\section{Theorem 3.2 and Proof}
\label{appendix:D}

Prior to proving Theorem~\ref{theorem:influence}, we present a lemma that establishes the lower bound of node influence between two nodes. In the subsequent proof, we adopt the approach outlined in \cite{huang2020graph} and \cite{xu2018representation}, utilizing GCNs~\cite{kipf2017semi} as the exemplar GNN for simplicity.  It is worth noting that our proof can be readily extended to different types of models (such as GAT~\cite{velivckovic2017graph} and GraphSAGE~\cite{hamilton2017inductive}) by assigning different values to edge weights. Specifically, the propagation process in the $l$-th layer can be represented as $\mathbf{H}^{(l+1)}=\sigma(\hat{\mathbf{A}}\mathbf{H}^{(l)}\mathbf{W}^{(l})$, where $\mathbf{H}^{(l)}$ and $\mathbf{W}^{(l)}$ denote the node representation and weight parameter matrices, respectively. $\hat{\mathbf{A}}=\mathbf{D}^{-1}\mathbf{A}$ represents the adjacency matrix after row normalization, ensuring that each row of $\hat{\bA}$ sums up to 1. Following the convention of \citet{huang2020graph}, \citet{wang2020unifying}, and \citet{xu2018representation}, we set $\sigma$ as the identity function and $\mathbf{W}$ as the identity matrix.
Additionally, we assume that the propagation process is performed over a sufficient number of iterations. Consequently, the output representation of a node can be expressed as a function of the representations of its neighboring nodes.

\begin{lemma}
Consider the log-expectation of node influence between node $v_i$ and node $v_j$, i.e., $ \mathbb{E}\left(\log\left(I(v_i,v_j)\right)\right)$. Assume that the node degrees are distributed uniformly for each node with the mean value $d$. Then, $ \mathbb{E}\left(\log\left(I(v_i,v_j)\right)\right)\geq \log P_S(v_i, v_j)-L_S(v_i,v_j)\cdot\log d$, where $L_S(v_i,v_j)$ is the shortest path distance between $v_i$ and $v_j$, and $P_S(v_i, v_j)$ is the number of paths with length of $L_S(v_i,v_j)$ from $v_i$ to $v_j$.
\label{lemma1}
\end{lemma}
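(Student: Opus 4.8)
The plan is to analyze the Jacobian $\partial \bv_i/\partial \bv_j$ under the simplified GCN propagation model specified in the preamble, where $\sigma$ is the identity, $\bW$ is the identity, and the propagation runs for enough iterations that node $v_i$'s representation depends on $v_j$. Under these assumptions the output representation is a linear function of the input, namely a high power of the normalized adjacency matrix $\hat{\bA} = \bD^{-1}\bA$ applied to the input features. First I would write $\partial \bv_i / \partial \bv_j = (\hat{\bA}^t)_{ij}\cdot \mathbf{I}$ for a sufficiently large number of propagation steps $t$, so that the norm of the Jacobian reduces (up to a constant depending on the chosen subordinate norm) to the scalar entry $(\hat{\bA}^t)_{ij}$.

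The key combinatorial step is to expand $(\hat{\bA}^t)_{ij}$ as a sum over walks of length $t$ from $v_i$ to $v_j$. Each term in this sum is a product of entries of $\hat{\bA}$ along the walk, and each such entry is $1/\deg(\cdot)$ under row normalization (using the identity-weight assumption). If I take $t = L_S(v_i,v_j)$, the shortest path length, then the only contributing walks are the $P_S(v_i,v_j)$ shortest paths themselves, and each contributes a product of $L_S(v_i,v_j)$ reciprocal-degree factors. This gives $I(v_i,v_j) \approx \sum_{\text{shortest paths}} \prod_{\text{edges}} \frac{1}{\deg(\cdot)}$. Taking logarithms and then expectations, I would invoke the uniform-degree assumption (mean degree $d$) together with a convexity/Jensen argument: $\mathbb{E}\log\bigl(\sum_k \prod 1/\deg\bigr) \geq \log P_S(v_i,v_j) - L_S(v_i,v_j)\cdot \log d$, where the $\log P_S$ term counts the number of shortest paths and the $-L_S \cdot \log d$ term accounts for the expected log of the product of degrees along a path of length $L_S$.

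The main obstacle I anticipate is making the passage from $\mathbb{E}\log\bigl(\sum_{\text{paths}}\prod 1/\deg\bigr)$ to the clean bound $\log P_S - L_S\log d$ rigorous, since the degrees are random and appear inside both a product and an outer sum before the logarithm. I expect to handle this by applying Jensen's inequality to pull the expectation inside the logarithm in the favorable direction, using $\mathbb{E}[\log(1/\deg)] \geq -\log d$ (or the reverse, depending on how the inequality is oriented) via Jensen on the concave $\log$, and treating the $P_S$ shortest paths symmetrically so that their contributions factor into the count $\log P_S$ plus a sum of $L_S$ independent $\mathbb{E}[\log(1/\deg)]$ terms. A secondary subtlety is justifying that longer walks (length $> L_S$) contribute negligibly or only strengthen the lower bound, and that the choice of subordinate norm only affects multiplicative constants that vanish in the asymptotic/order-sense statement. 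Once Lemma~\ref{lemma1} is established in this per-pair form, Theorem~\ref{theorem:influence} should follow by averaging the bound over the target set $\mathcal{V}$ and recognizing the geometric means $\widetilde{P_S}$ and $\overline{L}_S$.
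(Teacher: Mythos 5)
Your proposal follows essentially the same route as the paper's proof: linearize the GCN propagation so that the Jacobian entry becomes a sum over paths from $v_i$ to $v_j$ of products of reciprocal node degrees, restrict to the $P_S(v_i,v_j)$ shortest paths to obtain a lower bound, and use the mean degree $d$ for each factor. The only notable difference is how expectation and logarithm are interchanged: you invoke Jensen on the concave $\log$ of the degrees (which works in the correct direction, since $\mathbb{E}[\log \deg]\le\log d$ gives $\mathbb{E}[\log(1/\deg)]\ge-\log d$), whereas the paper writes $\mathbb{E}(\log I(v_i,v_j))=\log\mathbb{E}(\|\partial\bh_i/\partial\bh_j\|)$ and bounds the expected path sum directly, so your treatment of that step is, if anything, the more careful of the two.
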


\begin{proof}
According to the propagation strategy used in GCNs, we acknowledge that the representation of node $v_i$ can be represented as 
$$
\bh_i=\frac{1}{D_{ii}}\sum\limits_{k\in\mathcal{N}(i)}a_{ik}\bh_k,
$$
where $\mathcal{N}(i)$ denotes the set of neighboring nodes of node $v_i$. After that, we can expand the equation via incorporating more neighbors of node $v_i$:
\begin{equation}
\begin{aligned}
\bh_i&=\frac{1}{D_{ii}}\sum\limits_{k\in\mathcal{N}(i)}a_{ik}\frac{1}{D_{kk}}\sum\limits_{l\in\mathcal{N}(k)}a_{kl}\bh_l\\
&=\frac{1}{D_{ii}}\sum\limits_{k\in\mathcal{N}(i)}a_{ik}\frac{1}{D_{kk}}\sum\limits_{l\in\mathcal{N}(k)}a_{kl}\ \dotsi\ \frac{1}{D_{mm}}\sum\limits_{o\in\mathcal{N}(m)}a_{mo}\bh_o.
    \\
\end{aligned}
\end{equation}
In this manner, the node influence $I_{i,j}=\left\|\partial \bh_i/\partial \bh_j\right\|$ can be represented as:

\begin{equation}
    \begin{aligned}
\left\|\frac{\partial \bh_i}{\partial \bh_j}\right\|
    &=\left\|\frac{\partial}{\partial\bh_j}\left(
    \frac{1}{D_{ii}}\sum\limits_{k\in\mathcal{N}(i)}a_{ik}\frac{1}{D_{kk}}\sum\limits_{l\in\mathcal{N}(k)}a_{kl}\ \dotsi\ \frac{1}{D_{mm}}\sum\limits_{o\in\mathcal{N}(m)}a_{mo}\bh_o
    \right)\right\|\\
    &=\left\|\frac{\partial}{\partial\bh_j}\left(
    \left(
    \frac{1}{D_{ii}}a_{ik^1_1}\frac{1}{D_{k^1_1k^1_1}}a_{k^1_1k^1_2}\ \dotsi\frac{1}{D_{k^1_{n_1}k^1_{n_1}}}a_{k^1_{n_1}j}\bh_j
    \right)\right.\right.\\
    &+\left.\left.\ \dotsi\ +\left(
    \frac{1}{D_{ii}}a_{ik^n_1}\frac{1}{D_{k^n_1k^n_1}}a_{k^n_1k^n_2}\ \dotsi\frac{1}{D_{k^n_{n_n}k^n_{n_n}}}a_{k^n_{n_n}j}\bh_j
    \right)\right)\right\|. 
       \end{aligned}
       \label{eq: influence1}
\end{equation} 
In the above derivation, we begin by replacing the term $\bh_i$ with the iterative expansion of its neighboring nodes. This expansion involves selecting only $n$ paths from $v_i$ and to $v_j$, where $n_i$ represents the number of intermediate nodes along the $i$-th path. This selection is motivated by the fact that, when considering the gradient between $v_i$ and $v_j$, the derivatives along paths that do not include $v_j$ would be 0 and thus can be disregarded. Subsequently, we proceed to extract the shared term $\left\|\partial \bh_j/\partial \bh_j\right\|$:
\begin{equation}
    \begin{aligned}
    \left\|\frac{\partial \bh_i}{\partial \bh_j}\right\|
    &=\left\|\frac{\partial\bh_j}{\partial\bh_j}\right\|\cdot\left(
    \left(
    \frac{1}{D_{ii}}a_{ik^1_1}\frac{1}{D_{k^1_1k^1_1}}a_{k^1_1k^1_2}\ \dotsi\frac{1}{D_{k^1_{n_1}k^1_{n_1}}}a_{k^1_{n_1}j}
    \right)\right.\\
    &+\left.\ \dotsi\ +\left(
    \frac{1}{D_{ii}}a_{ik^n_1}\frac{1}{D_{k^n_1k^n_1}}a_{k^n_1k^n_2}\ \dotsi\frac{1}{D_{k^n_{n_n}k^n_{n_n}}}a_{k^n_{n_n}j}
    \right)\right)\\  
    &=
    \left(
    \frac{1}{D_{ii}}a_{ik^1_1}\frac{1}{D_{k^1_1k^1_1}}a_{k^1_1k^1_2}\ \dotsi\frac{1}{D_{k^1_{n_1}k^1_{n_1}}}a_{k^1_{n_1}j}
    \right)\\
    &+\ \dotsi\ +\left(
    \frac{1}{D_{ii}}a_{ik^n_1}\frac{1}{D_{k^n_1k^n_1}}a_{k^n_1k^n_2}\ \dotsi\frac{1}{D_{k^n_{n_n}k^n_{n_n}}}a_{k^n_{n_n}j}
    \right).
    \end{aligned}
           \label{eq: influence2}
\end{equation}
In this derivation, we first employ the identity $\left\|\partial \bh_j/\partial \bh_j\right\|=1$. This identity holds because $\left\|\partial \bh_j/\partial \bh_j\right\|=\|\mathbf{I}\|=\sup_{\|\bh\|=1}{\|\mathbf{I}\bh\|}=1$. The resulting term represents an expectation that involves summing the products of node degrees along all paths connecting $v_i$ and $v_j$. As a result, it surpasses the values obtained by summing the products of node degrees along (potentially multiple) paths with the minimum node degree product:
\begin{equation}
    \begin{aligned}
        \left\|\frac{\partial \bh_i}{\partial \bh_j}\right\|
    &\geq
    \left(
    \frac{1}{D_{ii}}a_{ik^{P_1}_1}\frac{1}{D_{k^{P_1}_1k^{P_1}_1}}a_{k^{P_1}_1k^{P_1}_2}\ \dotsi\frac{1}{D_{k^{P_1}_{n_1}k^{P_1}_{n_1}}}a_{k^{P_1}_{n_1}j}
    \right)\\
    &+\ \dotsi\ +\left(
    \frac{1}{D_{ii}}a_{ik^{P_m}_1}\frac{1}{D_{k^{P_m}_1k^{P_m}_1}}a_{k^{P_m}_1k^{P_m}_2}\ \dotsi\frac{1}{D_{k^{P_m}_{n_{P_m}}k^{P_m}_{n_{P_m}}}}a_{k^{P_m}_{n_{P_m}}j}
    \right) \\
    &=  P_{m}\cdot 
    \max\left(
    \left(
    \frac{1}{D_{ii}}a_{ik^1_1}\frac{1}{D_{k^1_1k^1_1}}a_{k^1_1k^1_2}\ \dotsi\frac{1}{D_{k^1_{n_1}k^1_{n_1}}}a_{k^1_{n_1}j}
    \right)\right.\\
    &\ \ \ \ \ \ ,\left.\ \dotsi\ ,\left(
    \frac{1}{D_{ii}}a_{ik^m_1}\frac{1}{D_{k^m_1k^m_1}}a_{k^m_1k^m_2}\ \dotsi\frac{1}{D_{k^m_{n_m}k^m_{n_m}}}a_{k^m_{n_m}j}
    \right)\right).\\
    \end{aligned}
\end{equation}

Assuming that the node degrees are uniformly distributed, the expectation of node degree products on path $P_i$ is $d^{(n_{P_i}+1)}$, where $n_{P_i}+1$ is the length, and $d$ is the expectation of node degrees. Furthermore, it is noteworthy that these paths are exactly the shortest paths between $v_i$ and $v_j$. Therefore,
\begin{equation}
    \mathbb{E}\left(\left\|\frac{\partial \bh_i}{\partial \bh_j}\right\|\right)\geq P_m\left(1/d\right)^{(n_*+1)}=P_md^{-\left(L_S(v_i,v_j)\right)},
\end{equation}
where $L_S(v_i,v_j)$ denotes the shortest path distance between node $v_i$ and node $v_j$, and $P_m$ is the number of these paths. Then we can achieve the final result:
\begin{equation}
    \mathbb{E}\left(\log\left(I(v_i,v_j)\right)\right)=\log \mathbb{E}\left(\left\|\frac{\partial \bh_i}{\partial \bh_j}\right\|\right)\geq \log P_S(v_i, v_j)-L_S(v_i,v_j)\cdot\log d,
\end{equation}
where $P_S(v_i,v_j)$ is the number of paths with length of $L_S(v_i,v_j)$ from $v_i$ to $v_j$.
\end{proof}

Lemma~\ref{lemma1} demonstrates that the expectation of logarithmic node influence between two nodes is related to two perspectives: the shortest path distance and the number of shortest path distances between them.
Now with Lemma~\ref{lemma1}, we can prove Theorem 3.2.
\\
\textbf{Theorem 3.2.}
\textit{Consider the node influence from node $u$ to a node set $\mathcal{V}$. Denote the geometric mean of the node influence to all nodes in $\mathcal{V}$ as $I_{\mathcal{V}}(u)=\sqrt[|\mathcal{V}|]{\prod_{i=1}^{|\mathcal{V}|}I(u,v_i)}$, where $v_i$ is the $i$-th node in $\mathcal{V}$. Assume the node degrees are randomly distributed with the mean value as $d$. Then, 
\begin{equation}
    \mathbb{E}(\log I_{\mathcal{V}}(u))\geq\log\widetilde{ P_S}(u,  \mathcal{V})-\log d\cdot \overline{L}_S(u, \mathcal{V}),
\end{equation} where $\overline{L}_S(u,\mathcal{V})$ is the average shortest path distance between $u$ and nodes in $\mathcal{V}$. $\widetilde{P_S}(u,\mathcal{V})$ is the geometric mean of the numbers of shortest paths between $u$ and nodes in $\mathcal{V}$.}

\begin{proof}
We know $\log I_{\mathcal{V}}(v_k)$ can be represented as follows:
\begin{equation}
    \begin{aligned}
        \log I_{\mathcal{V}}(u)
        &=\frac{1}{|\mathcal{V}|}\sum\limits_{i=1}^{|\mathcal{V}|} \log I(u, v_i).
    \end{aligned}
\end{equation}
Based on Lemma~\ref{lemma1}, we know:

\begin{equation}
    \begin{aligned}
        \mathbb{E}\left(\log I_{\mathcal{V}}(v_k)\right)
        &=\frac{1}{|\mathcal{V}|}\sum\limits_{i=1}^{|\mathcal{V}|} \log \mathbb{E}\left(I(u, v_i)\right)\\
        &\geq\frac{1}{|\mathcal{V}|}\cdot\sum_{i=1}^{|\mathcal{V}'|}\left(\log P_S(u, v_i)-L_S(u,v_i)\cdot\log d\right),
    \end{aligned}
\end{equation}
where $L_S(u,v_i)$ denotes the shortest path distance between node $u$ and node $v_i$, and $P_S(u,v_i)$ is the number of the shortest paths between $u$ and $v_i$. Note that $P_S(u,v_i)\geq1$, as any connected node pair should have at least a path. By rearranging the term, we can obtain the final inequality:

\begin{equation}
    \begin{aligned}
    \mathbb{E}(\log I_{\mathcal{V}}(u))
    &\geq\frac{1}{|\mathcal{V}|}\sum_{i=1}^{|\mathcal{V}|}\log P_S(u, v_i)-\log d\cdot\frac{1}{|\mathcal{V}|}\sum_{i=1}^{|\mathcal{V}|}L_S(u,v_i)\\
    &= \log \left(\prod_{i=1}^{|\mathcal{V}|}P_S (u, \mathcal{V})\right)^{\frac{1}{n}}-\log d\cdot \overline{L}_S(u, \mathcal{V})\\
    &=\log\widetilde{ P_S}(v_i,  \mathcal{V})-\log d\cdot \overline{L}_S(v_i, \mathcal{V}).
    \end{aligned}
\end{equation}

\end{proof}

\section{Prompts}\label{sec:prompt}
We provide the prompts used for many-shot ICL in the summarization, reasoning, and question answering tasks in Table~\ref{tab:prompt:ICL}, and for classification tasks in Table~\ref{tab:prompt:classification}.
\begin{table*}[htbp]
    \centering
    \caption{A list of prompts that we use for many-shot ICL on summarization, reasoning, and question answering tasks.}
    \label{tab:prompt:ICL}
    \vspace{-0.1in}
    \resizebox{1\textwidth}{!}{
    \renewcommand{\arraystretch}{1.5}
        \begin{tabular}{ll}
        \toprule
        \multicolumn{1}{p{.18\textwidth}}{\textbf{Types}} & \makecell{\multicolumn{1}{p{.82\textwidth}}{\textbf{Prompts}}} \\
        \midrule
        \midrule
        \multicolumn{1}{p{.18\textwidth}}{Summarization} & 
        \makecell{
            \multicolumn{1}{p{.82\textwidth}}{You are an expert in article summarization. I am going to give you some examples of article and its summary in fluent English. Here are several examples.}\\
            \multicolumn{1}{p{.82\textwidth}}{}\\
            \multicolumn{1}{p{.82\textwidth}}{\textcolor{gray}{(provide examples here with the following format.)}}\\
            \multicolumn{1}{p{.82\textwidth}}{Article: <article>}\\
            \multicolumn{1}{p{.82\textwidth}}{Summary: <summary>}\\
            \multicolumn{1}{p{.82\textwidth}}{}\\
            \multicolumn{1}{p{.82\textwidth}}{I am going to provide another article and I want you to summarize it. Give only the summary, and no extra commentary, formatting, or chattiness.}\\
            \multicolumn{1}{p{.82\textwidth}}{}\\
            \multicolumn{1}{p{.82\textwidth}}{Article: \{TARGET\_QUERY\}}\\
        } \\
        \noalign{\vskip 0.25ex}\cdashline{1-2}\noalign{\vskip 0.75ex}
        \makecell{
            \multicolumn{1}{p{.18\textwidth}}{Reasoning}\\
            \multicolumn{1}{p{.18\textwidth}}{}\\
            \multicolumn{1}{p{.18\textwidth}}{or}\\
            \multicolumn{1}{p{.18\textwidth}}{}\\
            \multicolumn{1}{p{.18\textwidth}}{Quesition Answering}\\
        }& 
        \makecell{
            \multicolumn{1}{p{.82\textwidth}}{You are an expert in multiple-choice question answering tasks. I am going to give you some examples in a multiple-choice question answering format. Here are several examples.}\\
            \multicolumn{1}{p{.82\textwidth}}{}\\
            \multicolumn{1}{p{.82\textwidth}}{\textcolor{gray}{(provide examples here with the following format.)}}\\
            \multicolumn{1}{p{.82\textwidth}}{Question: <question>}\\
            \multicolumn{1}{p{.82\textwidth}}{Answer: <answer>}\\
            \multicolumn{1}{p{.82\textwidth}}{}\\
            \multicolumn{1}{p{.82\textwidth}}{I am going to provide another question and I want you to predict its answer. Give only the choice the correct answer by selecting one of the options (e.g., '(A)', '(B)').}\\
            \multicolumn{1}{p{.82\textwidth}}{}\\
            \multicolumn{1}{p{.82\textwidth}}{Question: \{TARGET\_QUERY\}}\\
        } \\
        \bottomrule
        \end{tabular}
    }
\end{table*}

\begin{table*}[htbp]
    \small
    \centering
    \caption{A list of prompts that we use for many-shot ICL on five different extreme classification tasks.}
    \label{tab:prompt:classification}
    \vspace{-0.1in}
    \resizebox{1\textwidth}{!}{
    \renewcommand{\arraystretch}{1.5}
        \begin{tabular}{ll}
        \toprule
        \multicolumn{1}{p{.18\textwidth}}{\textbf{Types}} & \makecell{\multicolumn{1}{p{.82\textwidth}}{\textbf{Prompts}}} \\
        \midrule
        \midrule
        \multicolumn{1}{p{.18\textwidth}}{Financial PhraseBank} & 
        \makecell{
            \multicolumn{1}{p{.82\textwidth}}{You are an expert in financial sentiment analysis. Here are several examples.}\\
            \multicolumn{1}{p{.82\textwidth}}{}\\
            \multicolumn{1}{p{.82\textwidth}}{\textcolor{gray}{(provide examples here with the following format.)}}\\
            \multicolumn{1}{p{.82\textwidth}}{Sentence: <sentence>}\\
            \multicolumn{1}{p{.82\textwidth}}{Answer: <sentiment>}\\
            \multicolumn{1}{p{.82\textwidth}}{}\\
            \multicolumn{1}{p{.82\textwidth}}{I am going to provide another sentence and I want you to analyze the sentiment of it and respond with only one word: 'positive', 'negative', or 'neutral'. No extra commentary, formatting, or chattiness.}\\
            \multicolumn{1}{p{.82\textwidth}}{}\\
            \multicolumn{1}{p{.82\textwidth}}{Sentence: \{TARGET\_QUERY\}}\\
        }\\
        \noalign{\vskip 0.25ex}\cdashline{1-2}\noalign{\vskip 0.75ex}
        \multicolumn{1}{p{.18\textwidth}}{Banking77} & 
        \makecell{
            \multicolumn{1}{p{.82\textwidth}}{Given a customer service query, please predict the intent of the query. Here are several examples.}\\
            \multicolumn{1}{p{.82\textwidth}}{}\\
            \multicolumn{1}{p{.82\textwidth}}{\textcolor{gray}{(provide examples here with the following format.)}}\\
            \multicolumn{1}{p{.82\textwidth}}{service query: <query>}\\
            \multicolumn{1}{p{.82\textwidth}}{intent category: <category>}\\
            \multicolumn{1}{p{.82\textwidth}}{}\\
            \multicolumn{1}{p{.82\textwidth}}{I am going to provide another customer service query and I want you to predict the intent of the query. Give only the intent of the query, and no extra commentary, formatting, or chattiness. You can only make prediction from the following categories: \{77 classes of the Banking77 task\}}\\
            \multicolumn{1}{p{.82\textwidth}}{}\\
            \multicolumn{1}{p{.82\textwidth}}{service query: \{TARGET\_QUERY\}}\\
        }\\
        \noalign{\vskip 0.25ex}\cdashline{1-2}\noalign{\vskip 0.75ex}
        \multicolumn{1}{p{.18\textwidth}}{GoEmotion} & 
        \makecell{
            \multicolumn{1}{p{.82\textwidth}}{Given a comment, please predict the emotion category of this comment. Here are several examples.}\\
            \multicolumn{1}{p{.82\textwidth}}{}\\
            \multicolumn{1}{p{.82\textwidth}}{\textcolor{gray}{(provide examples here with the following format.)}}\\
            \multicolumn{1}{p{.82\textwidth}}{comment: <comment> }\\
            \multicolumn{1}{p{.82\textwidth}}{emotion category: <category> }\\
            \multicolumn{1}{p{.82\textwidth}}{}\\
            \multicolumn{1}{p{.82\textwidth}}{I am going to provide another comment and I want you to predict the emotion category of the comment. Give only the emotion category, and no extra commentary, formatting, or chattiness. You can only make prediction from the following categories: \{28 classes of the GoEmotion task\}}\\
            \multicolumn{1}{p{.82\textwidth}}{}\\
            \multicolumn{1}{p{.82\textwidth}}{comment: \{TARGET\_QUERY\}}\\
        } \\
        \bottomrule
        \end{tabular}
    }
\end{table*}

\clearpage

\section{Datasets} \label{appendix:data}
\begin{itemize}
    \item \textbf{XSum~\cite{narayan2018don}:} Extreme Summarization (XSum) is a dataset designed for evaluating abstractive single-document summarization systems. It contains 226,711 news articles from BBC (2010-2017), spanning various domains such as news, politics, sports, weather, business, technology, science, health, family, education, and entertainment. The goal is to generate a concise one-sentence summary answering the question, "What is the article about?"

    \item \textbf{Date~\cite{suzgun2023challenging}:} This dataset is designed for Date Understanding, where the task is to answer a provided question based on a small set of sentences related to a particular date.An example is: "Today is Christmas Eve of 1937. What is the date tomorrow in MM/DD/YYYY? Options: (A) 12/11/1937; (B) 12/25/1937; (C) 01/04/1938; (D) 12/04/1937; (E) 12/25/2006; (F) 07/25/1937"

    \item \textbf{Salient~\cite{suzgun2023challenging}:} This dataset is designed for Salient Translation Error Detection, where, given a source sentence written in German and its English translation, the task is to determine the type of translation error present in the translated sentence. An example is: "Source: Karl Borrom\u00e4us Joseph F\u00fcrst von Liechtenstein war ein kaiserlicher Feldmarschall. Translation: Charles Borromeo Joseph Prince of Liechtenstein was an judicial field marshal. The translation contains an error pertaining to Options: (A) Modifiers or Adjectives; (B) Numerical Values; (C) Negation or Antonyms; (D) Named Entities; (E) Dropped Content; (F) Facts"

    \item \textbf{Tracking7~\cite{suzgun2023challenging}:} This dataset is designed for Tracking Shuffled Objects, where, given the initial positions of a set of seven objects and a series of transformations (specifically, pairwise swaps) applied to them, the goal is to determine the final positions of the objects. An example is: "Alice, Bob, Claire, Dave, Eve, Fred, and Gertrude are on the same team in a soccer match. At the start of the match, they are each assigned to a position: Alice is playing striker, Bob is playing right winger, Claire is playing left winger, Dave is playing benchwarmer, Eve is playing goalkeeper, Fred is playing center midfielder, and Gertrude is playing cheerleader. As the game progresses, pairs of players occasionally swap positions. First, Eve and Claire trade positions. Then, Gertrude and Alice trade positions. Then, Fred and Bob trade positions. Then, Dave and Fred trade positions. Then, Fred and Bob trade positions. Then, Bob and Eve trade positions. Finally, Claire and Alice trade positions. At the end of the match, Gertrude is playing: (A) striker; (B) right winger; (C) left winger; (D) benchwarmer; (E) goalkeeper; (F) center midfielder; (G) cheerleader"

    \item \textbf{Financial PhraseBank (FP)~\cite{malo2014good}:} FP is a sentiment analysis dataset consisting of 4,840 sentences from English-language financial news, categorized by sentiment. The annotators were instructed to assess the sentences from an investor's perspective, determining whether the news would likely have a positive, negative, or neutral impact on stock prices. An example is: "Sentence: Pharmaceuticals group Orion Corp reported a fall in its third-quarter earnings, which were impacted by larger expenditures on R\&D and marketing. Label: negative."

    \item \textbf{Banking77~\cite{casanueva2020efficient}:} BANKING77 is a dataset for banking-domain intent detection, comprising 13,083 annotated examples across 77 distinct intents. It offers a complex and realistic representation of commercial systems. An example is: "Text: I found my lost card. Am I still able to use it? Label: Link to Existing Card."

    \item \textbf{GoEmotion~\cite{demszky2020goemotions}:} GoEmotion is the largest human-annotated dataset, consisting of 58k carefully selected Reddit comments, labeled with 27 emotion categories or "Neutral." The comments are extracted from popular English subreddits. An example is: "Text: I’m not even sure what it is, why do people hate it. Label: confusion."

    \item \textbf{GPQA~\cite{rein2023gpqa}:} GPQA is a multiple-choice question answering benchmark that features challenging graduate-level questions in biology, physics, and chemistry, designed to assess advanced reasoning skills. An example question is: "If a sperm from species A is injected into an egg from species B, and both species have the same number of chromosomes, what would be the main cause of the resulting zygote's mortality?"

\end{itemize}

\begin{table}[t]
\centering
\caption{Basic information and statistics of datasets adopted in our experiments.}
\label{table:dataset}
\vskip 0.1in
\setlength\tabcolsep{10.5pt}
\begin{tabular}{lccccc}
\toprule
\textbf{Dataset}    & \textbf{Task}    & \textbf{\#(Training samples)}    & \textbf{\#(Test samples)}    \\ \midrule
XSum                & Summarization                 & 204,045                & 11,334                     \\
Date              & Reasoning                & 369               & 250                        \\
Salient              & Reasoning                & 998                & 250                       \\ 
Tracking7         & Reasoning                & 1,750               & 250                        \\
Financial PhraseBank              & Classification                & 2,952               & 500     \\ 
Banking77             & Classification               & 10,003               & 3,080     \\     
GoEmotion              & Classification                & 43,410               & 5,427     \\   
GPQA             & Question Answering                & 250               & 198     \\   
\bottomrule
\end{tabular}
\end{table}

\section{KV Cache Analysis}\label{appendix:kv}

We begin by providing a simple analysis of the FLOPs between the vanilla transformer and the transformer with KV Cache.
We denote the length of the prefix context, model hidden size, the number of model layers, and the number of new tokens to generate as $n, d_{\text{model}}, L, \text{and}\; T$, respectively. 
\paragraph{Vanilla decoding without KV Cache.} The per-step cost for a length-$m$ forward pass in one layer can be approximated as:
\[
O\bigl(m\,d_{\text{model}}^2 + m^2\,d_{\text{model}}\bigr).
\]
At each step $t$, we re-run a forward pass on $m = (n + t - 1)$ tokens, thus the total cost across all $T$ decoding steps is:
\[
O\Bigl(\sum_{t=1}^{T} (n+t-1)^2\,d_{\text{model}}\Bigr) 
\;\approx\; 
O\Bigl(T\cdot(n+T)^2\,d_{\text{model}}\Bigr).
\]

\paragraph{With KV Cache.} To build the cache, it is the same as performing a full forward pass over the prefix (length $n$) \textit{once}:
\[
O\bigl(n^2\,d_{\text{model}} + n\,d_{\text{model}}^2\bigr)\times L.
\]
This stores the keys and values for all $n$ tokens in each layer. The per-step cost for each new token is:
\[
O\bigl(n\,d_{\text{model}} + d_{\text{model}}^2\bigr)
\times L.
\]
Thus, the total cost with KV cache over $T$ new tokens is:
\[
\underbrace{O\Bigl(n^2\,d_{\text{model}}\Bigr)}_{\text{initial prefix}}
\;+\;
\underbrace{O\Bigl(T\cdot\bigl(n\,d_{\text{model}} + d_{\text{model}}^2\bigr)\Bigr)}_{\text{decoding }T\text{ tokens}}.
\]
Hence, once we have built the cache for the prefix, each new token only requires $O(n)$ operations for attention plus $O(d_{\text{model}}^2)$ for projections and feed-forward, rather than re-encoding the entire sequence each time.

\section{Additional Experiments}

\subsection{Effect of Encoder Models}

In our main results, we use the Contriever~\cite{izacard2021unsupervised} model, a widely adopted encoder for information retrieval tasks, as the embedding model $f_\theta(\cdot)$. To assess the impact of different embeddings, we conduct ablations with Sentence-BERT (SBert)~\cite{reimers2019sentence} and DeBERTa~\cite{he2020deberta} as alternative encoders, evaluating RAG and MAPLE with 20, 60, and 100 pseudo-labeled examples. The results, presented in Table~\ref{tab:embed}, show that while performance varies across encoders, MAPLE consistently outperforms the RAG baseline, demonstrating its robustness and effectiveness across different embedding choices.

\begin{table}[htbp]
\centering
\caption{Impact of different embedding model $f_\theta(\cdot)$.}
\begin{tabular}{c c c c c c c c }
\toprule
\multirow{2.5}{*}{\textbf{Encoder}} &\multirow{2.5}{*}{\textbf{Method}} & \multicolumn{3}{c}{\textbf{Date}} & \multicolumn{3}{c}{\textbf{GoEmotion}} \\  \cmidrule(lr){3-8}   
&  & $|\mathcal{D}_U^{*}|$=20 & $|\mathcal{D}_U^{*}|$=60 & $|\mathcal{D}_U^{*}|$=100 & $|\mathcal{D}_U^{*}|$=20 & $|\mathcal{D}_U^{*}|$=60 & $|\mathcal{D}_U^{*}|$=100 \\ \midrule
\multirow{2.5}{*}{SBert} & RAG & 51.4 &52.4 &54.4  &31.3 &32.7 &33.3 \\ \cmidrule(lr){2-8}
& MAPLE & 52.7 &54.0 &55.2 &34.7 &36.7 &37.3 \\\midrule
\multirow{2.5}{*}{DeBERTa} & RAG & 52.0 &53.6 &55.2	 &32.7 &33.7 &34.4 \\ \cmidrule(lr){2-8}
& MAPLE &54.4 &55.2 &57.6	 &37.3 &37.2 &39.3 \\
 \bottomrule
\end{tabular}
\label{tab:embed}
\end{table}

\subsection{Ablation Study of Influence Score Components}

We conduct an ablation study to analyze the components of the influence score defined in Equation~\ref {eqn:influence_score}, namely the geometric mean of the numbers of shortest paths 
$\widetilde{P_S}(\cdot,\cdot)$ and the average shortest path distance $\overline{L}_S(\cdot,\cdot)$. Experiments are performed under the setting with 20 labeled and 100 pseudo-labeled examples. Results are presented in Table~\ref{tab:influence_ablation}.

While the length of the shortest path captures how quickly information can travel, it overlooks robustness—relying on a single path can be fragile to noise or minor data variations. On the other hand, using only the number of shortest paths captures redundancy but disregards distance; many long paths may not imply strong influence. Our influence score is designed to capture both efficiency (via short paths) and robustness (via multiple paths), resulting in more reliable and informative demonstration selection for many-shot ICL.

\begin{table}[h]
\centering
\caption{Ablation study of components in the influence score}
\begin{tabular}{lcccc}
\toprule
\textbf{Dataset} & \textbf{Banking77} & \textbf{GoEmotion} & \textbf{GPQA} \\
\midrule
$\overline{L}_S(\cdot,\cdot)$ & 75.3 & 37.6 & 36.4 \\
$\widetilde{P_S}(\cdot,\cdot)$ & 78.6 & 37.2 & 36.9 \\
Influence score & 80.8 & 38.1 & 37.4 \\
\bottomrule
\end{tabular}

\label{tab:influence_ablation}
\end{table}


\end{document}